\newcounter{Lcount}
\newcommand{\numsquishlist}{
   \begin{list}{\arabic{Lcount}. }
    { \usecounter{Lcount}
 \setlength{\itemsep}{-.1ex}      \setlength{\parsep}{0ex}
      \setlength{\topsep}{0ex}       \setlength{\partopsep}{0ex}
      \setlength{\leftmargin}{1em} \setlength{\labelwidth}{1em}
      \setlength{\labelsep}{0.1em} } }
\newcommand{\numsquishend}{\end{list}}
\newcommand{\squishlist}{
   \begin{list}{$\bullet$}
    { \setlength{\itemsep}{-.1ex}      \setlength{\parsep}{0ex}
      \setlength{\topsep}{0ex}       \setlength{\partopsep}{0ex}
      \setlength{\leftmargin}{.8em} \setlength{\labelwidth}{1em}
      \setlength{\labelsep}{0.5em} } }
\newcommand{\squishend}{\end{list}}
\definecolor{Gray}{gray}{0.85}
\definecolor{LightGray}{rgb}{0.9,0.9,0.9}
\definecolor{LightBlue}{rgb}{0.8,0.8,1}
\newcommand{\indep}{\rotatebox[origin=c]{90}{$\models$}}
\newcounter{problem}
\newenvironment{problem}[1][htb]
  {
   \begin{algorithm2e}[#1]%
   \SetAlFnt{\small}
    \SetAlCapFnt{\small}
    \SetAlCapNameFnt{\small}
    \SetAlCapHSkip{0pt}
  }{\end{algorithm2e}}
\def\BibTeX{{\rm B\kern-.05em{\sc i\kern-.025em b}\kern-.08emT\kern-.1667em\lower.7ex\hbox{E}\kern-.125emX}}
\begin{document}

%
\title{Identifying Linear Models in Multi-Resolution Population Data using Minimum Description Length Principle to Predict Household Income}

%
\author{Chainarong Amornbunchornvej}
\email{chainarong.amo@nectec.or.th}
\orcid{0000-0003-3131-0370}
\authornote{Corresponding author}
\author{Navaporn Surasvadi}
\email{navaporn.surasvadi@nectec.or.th}
\orcid{1234-5678-9012}
\author{Anon Plangprasopchok}
\email{anon.plangprasopchok@nectec.or.th}
\orcid{1234-5678-9012}
\author{Suttipong Thajchayapong}
\email{suttipong.thajchayapong@nectec.or.th}
\orcid{1234-5678-9012}
\affiliation{%
  \institution{Thailand's National Electronics and Computer Technology Center (NECTEC)}
  \city{Pathum Thani}
  \country{Thailand}
} %

%
\renewcommand{\shortauthors}{C. Amornbunchornvej, et al.}

%
\begin{abstract}
One shirt size cannot fit everybody, while we cannot make a unique shirt that fits perfectly for everyone because of resource limitation. This analogy is true for the policy making. Policy makers cannot establish a single policy to solve all problems for all regions because each region has its own unique issue.  In the other extreme, policy makers also cannot create a policy for each small village due to the resource limitation. Would it be better if we can find a set of largest regions such that the population of each region within this set has common issues and we can establish a single policy for them? In this work, we propose a framework using regression analysis and minimum description length (MDL) to find a set of largest areas that have common indicators, which can be used to predict household incomes efficiently. Given a set of household features, and a multi-resolution partition that represents administrative divisions, our framework reports a set $\mathcal{C}^*$ of largest subdivisions that have a common predictive model for population-income prediction. We formalize a problem of finding $\mathcal{C}^*$ and propose the algorithm that can find $\mathcal{C}^*$ correctly. We use both simulation datasets as well as a real-world dataset of Thailand's population household information to demonstrate our framework performance and application. The results show that our framework performance is better than the baseline methods. Moreover, we demonstrate that the results of our method can be used to find indicators of income prediction for many areas in Thailand. By adjusting these indicator values via policies, we expect people in these areas to gain more incomes. Hence, the policy makers are able to plan to establish the policies by using these indicators in our results as a guideline to solve low-income issues. Our framework can be used to support policy makers to establish policies regarding any other dependent variable beyond incomes in order to combat poverty  and other issues. We provide the R package, MRReg, which is the implementation of our framework in R language. MRReg package comes with the documentation for anyone who is interested in analyzing linear regression on multiresolution population data.

\end{abstract}

%
%
\begin{CCSXML}
<ccs2012>
<concept>
<concept_id>10010147.10010257.10010293</concept_id>
<concept_desc>Computing methodologies~Machine learning approaches</concept_desc>
<concept_significance>500</concept_significance>
</concept>
<concept>
<concept_id>10002951.10003227.10003351</concept_id>
<concept_desc>Information systems~Data mining</concept_desc>
<concept_significance>300</concept_significance>
</concept>
<concept>
<concept_id>10010405.10010455.10010461</concept_id>
<concept_desc>Applied computing~Sociology</concept_desc>
<concept_significance>300</concept_significance>
</concept>
</ccs2012>
\end{CCSXML}

\ccsdesc[500]{Computing methodologies~Machine learning approaches}
\ccsdesc[300]{Information systems~Data mining}
\ccsdesc[300]{Applied computing~Sociology}

%
\keywords{multi-resolution data, regression analysis, minimum description length, population data, model selection}

\renewcommand{\shortauthors}{C. Amornbunchornvej et al.}

\maketitle

\section{Introduction}
Ending poverty in all its form everywhere has been recognized as the greatest global challenge in the 2030 Agenda for Sustainable Development~\cite{assembly20152030}.  Annually, there are at least 18 millions human deaths that caused by poverty~\cite{pogge2005world}.  A common goal is to increase income beyond poverty line (e.g. 1.90 USD per day). However, poverty alleviation often requires comprehensive measures. In practice, implementing poverty alleviation programs would depend on the realities, capabilities and level of development of each nation. 
In the past, there was a tendency to use a one-size-fit-all policy as a solution in the international level~\cite{lahn2017curse}.
Even though one-size-fit-all policies enable uncomplicated deployment of government resources, in reality, they are not suitable in alleviating poverty as each region possesses different problems and socioeconomic characteristics~\cite{berdegue2002rural}. The solution in one region might not work for another region even if they share some properties~\cite{lahn2017curse,RIOJA2004429}. If one were to drill down to the household-level poverty, one would often find that each household faces different problems in respect to demographic, health, education, living standards, and accessibility to available resources. Subsequently, lifting those household members above the poverty line in sustainable way would often require targeted poverty alleviation with unique solutions. However, in reality, many developing countries often do not possess enough resources. Hence, it is challenging for policy makers to be able to optimize between cheap one-size-fit-all and expensive targeted individual-based approaches.  



\section{Related work}
To combat poverty, one should find root causes of issues and all related factors that contribute to poverty of populations.  There are many factors that can contribute to poverty,  such as  health issues, lacking of education, inaccessibility of public service, severe living condition, lacking of job opportunities, etc ~\cite{Strotmann2018,alkire2019global}. To measure the degree of poverty, several works proposed to use the poverty indices, such as Human Poverty Index (HPI)~\cite{anand1997concepts}, Multidimensional Poverty Index (MPI)~\cite{alkire2010multidimensional,alkire2019global}, etc. However, a specific poverty index covers only some aspects of population, which might not include the actual root cause predictors~\cite{alkire2010multidimensional,Strotmann2018}. Additionally, different areas typically have different issues and root causes of poverty~\cite{commins2004poverty,pringle2000cross}. 

To infer which factors are the main causes of poverty in the area from population data, the first step is to find the associations between a dependent variable (e.g. poverty index), and independent variables (potential predictors). This is because the causal inference requires association relations among variables to be known before we can find causal directions~\cite{pearl2009causality,peters2017elements}. One of the approaches that is widely used in social science to find association relations is Regression Analysis~\cite{nathans2012interpreting}. It has been used as a part of framework in poverty indices analysis by the works in~\cite{njuguna2017constructing,jean2016combining}. Linear regression has been extended and used in various types of data beyond linear-static data, such as time series data~\cite{10.2307/1912791,udny1927method,atukeren2010relationship}, non-linear data~\cite{an2007fast,kostopoulos2018semi}, etc.

In this work, we are interested to find variables that associate with population household incomes, which is one of the main components in several poverty indices~\cite{anand1997concepts,alkire2010multidimensional,alkire2019global}. 

However, different areas resolutions (e.g. provinces vs. country levels) might have different predictors that are associated with household incomes. In the perspective of policy makers, placing the wrong policies to the areas might not be able to solve the issues. For example, urban and rural areas have different issues of poverty~\cite{commins2004poverty}. Hence, it is important to find the way to select the model from the right resolutions that truly represents issues of areas. According to the Occam's razor principle, we prefer a policy that is simple and covers issues of larger population as an efficient policy. One of the model selection approaches that is widely used is Minimum Description Length (MDL)~\cite{hansen2001model}, which was introduced in the field of data analytics by \cite{RISSANEN1978465}. It is used in linear regression setting as a feature selection criteria to find the informative features~\cite{schmidt2012consistency}. Nevertheless, to the best of our knowledge, there is no framework that compares the regression models from different areas resolution using MDL and asks which one is a better model as a predictor for household-income prediction.

\begin{figure}
    \centering
    \includegraphics[width=1\columnwidth]{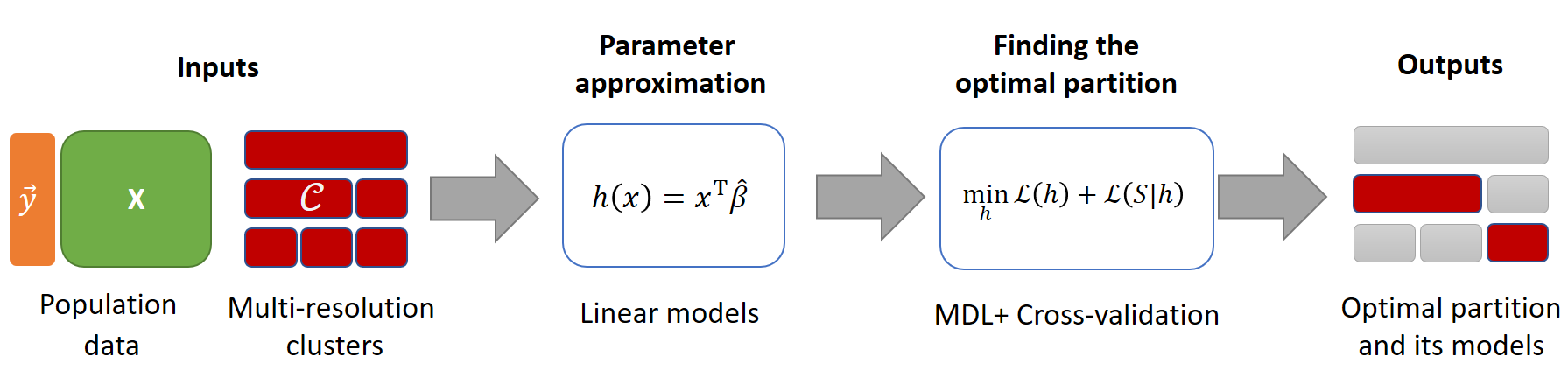}
    \caption{The proposed framework overview. Given a dataset $S=\{\vec{y},\textbf{X}\}$ and a set of Multi-resolution clusters $\mathcal{C}$ (Def.~\ref{def:MRCs}) as inputs  where a vector $\vec{y}$ contains realizations of dependent variables, and an $n\times l$ matrix $\textbf{X}$ contains realizations of independent variables.   In the first step, the framework performs the parameter estimation (Section~\ref{sec:paraest}). Then, we use MDL (Section~\ref{sec:mdl}) and cross-validation scheme (Section~\ref{sec:crossv}) to infer the maximal homogeneous partition in Section~\ref{sec:inferMaxPar}, which is the optimal-resolution partition that effectively represents population. }
    \label{fig:mainFramework}
\end{figure}

\subsection{Comparison with the clustering problem} 
Given a set of multi-resolution partitions as an input of our framework along with the population data (realizations of dependent and independent variables).  We propose the framework that is able to infer a subset of multi-resolution partitions (Fig.~\ref{fig:mainFramework}), which is  a set of largest partition s.t. each partition has a model that represents population well in the term of predictive performance (see Section~\ref{sec:crossv}). In contrast, in the clustering problem, the main goal is to find a set of clusters or partitions that is fitting the population data well w.r.t. some similarity function. Typically, the inputs of clustering are the number of clusters and/or some threshold(s) that define clusters, as well as population data. 

Hence, in a dataset that has only population data without  a set of multi-resolution partitions, we can use some data clustering approach to infer the hierarchical clusters (multi-resolution partitions). Then, our proposed framework can be used to find the optimal partition in the hierarchical clusters that covers the entire population s.t. the partition models represent the population well in the aspect of prediction of the target dependent variable. 

Policy makers might consider each cluster in the optimal partition from our framework as a group of individuals that shares the same indicators that can predict the target variable well.  If policy makers want to change the value of target variable in the population, then they might establish a policy for each cluster in our optimal partition instead of either making a single one-size-fit-all policy or having a single unique policy for each individual.

\begin{table}[]
\caption{Comparison between Mixture of Regression and our proposed framework. }
\label{TB:MixtureComp}
\begin{tabular}{|c|c|c|}
\hline
\rowcolor[HTML]{ECF4FF} 
\textbf{Properties\textbackslash{}Frameworks} & \textbf{Mixture of Regression} & \textbf{Proposed framework} \\ \hline
Assign individuals into partitions            & Yes                             & Yes                         \\ \hline
\rowcolor[HTML]{EFEFEF} 
Input: multi-resolution partitions            & Not required                     & Required                     \\ \hline
Input: number of clusters                     & Required                         & Not required                 \\ \hline
\rowcolor[HTML]{EFEFEF} 
Homogeneity of model of clusters               & Not report                      & Report                      \\ \hline
Time complexity of exact solution                               & No upper bound                  & Have upper bound            \\ \hline
\end{tabular}
\end{table}
\subsection{Comparison with mixture models} 
The concept of mixture models has been widely used for modeling complex distributions by using several mixture components for representing ``subdistribution'' or ``subpopulations''~\cite{MCLA2000}. 

Generally, an individual subdistribution is modeled using a ``simpler'' distribution, e.g., Gaussian distribution. Given a specific number of mixture components and a distribution type of mixture components, a finite mixture model will estimate degree of membership of an individual datapoint to all mixture components as well as parameters of the components. Data points belong to the same partition if they have the highest membership degree to the same mixture component. Mixture models can be generalized for regression problems~\cite{deVeaux:1989:MLR,Wedel1995}. Particularly, a regression function is used, instead of a distribution function, for representing a mixture component. Datapoints belong to the same component would share similar ``behavior'' according to the regression function and its variables. 

In contrary to traditional data clustering techniques, a mixture model can form a data partition without any similarity function defined. The model, however, requires an assumption of subpopulations that are distributed or characterized according to a certain class of distribution or regression function. Ideally, the mixture model framework would naturally fit to our problem if a set of policies is intended to apply to subpopulations, which can be clearly defined by a certain class of distribution or a regression function. Since it is not straightforward to impose structural constraints such as administrative regions (e.g., a pair of administrative regions at the same level cannot belong to the same subpopulation) into the inference algorithm, one can simply let the mixture model to flexibly ``search'' for subpopulation partitions and subsequently impose the administrative constraints if some partition violates such constraints. Our approach, however, explicitly takes the structural constraints into account by ``searching'' and evaluating the data partition along the administrative hierarchy. Consequently, the data partition from our approach always complies to the constraints and requires no additional step -- the constraint checking -- as in the mixture model case.

Table~\ref{TB:MixtureComp} briefly illustrates the difference between Mixture of Regression and our proposed framework. Both approaches assign individuals into a partition. While Mixture of Regression requires the number of clusters (components) as a parameter, our approach uses the multi-resolution partitions as an input. Additionally,  instead of only assigning individuals into a partition, our approach can also evaluate how strong the individuals in the same cluster share the same model (see Section~\ref{sec:crossv}). Moreover, the time complexity of our approach is bounded for the exact solution, while the bound of Mixture approach is unknown; but there is an algorithm~\cite{li2018learning} that can provide the approximate solution with the bound $\mathcal{O}(Nd)$ where $N$ is a number of individuals and $d$ is a number of dimensions. 

\subsection{Our contributions}

To fill the gaps mentioned above, in this paper, we aim to formalize the multi-resolution model selection problem in regression analysis using MDL principle and propose the framework (Fig.~\ref{fig:mainFramework}) as a solution of our model selection problem. The proposed framework is capable of:

\squishlist
\item {\bf Inferring the optimal-resolution partition:} inferring the best fit partition that effectively represents population in the aspect of model prediction and MDL; and
\item {\bf Inferring the optimal model for each area:} inferring the best fit model among candidate functions given an area population data as well as choices of functions in a linear function class.
\squishend
After formalizing the new problem, we provide the algorithm that guarantees soundness and completeness of finding the optimal partition, which is the output of the new problem. We evaluate our approach performance and demonstrate its application using both simulation and real-world datasets of Thailand population households.  Our framework can be generalized beyond the context of poverty analysis. 
\newcommand{\argmax}{\mathop{\mathrm{argmax}}\limits} 
\newcommand{\argmin}{\mathop{\mathrm{argmin}}\limits} 
\newtheorem{assumption}{Assumption}
\section{Problem formalization}

\begin{table}
\caption{Notations and symbols }
\label{tb:symbloTable}
\begin{small}
\begin{tabular}{|l|p{4in}|}
\hline
{\bf Term and notation} & {\bf Description}                                                                                                                                \\ \hline\hline
 $n$ & Number of individuals \\ \hline
  $d$ & Number of independent variables \\ \hline
  $l$ & Number of parameters \\ \hline
 $D=\{1,\dots,n\}$               &  Set of individual indices           \\ \hline
  $C$               &  Cluster where $C \subseteq D$           \\ \hline
 $\mathcal{C}=\{C_{j,k}\}$         & Set of Multi-resolution clusters where $C_{j,k}$ is a $j$th cluster at $k$th layer. \\ \hline
  $\mathcal{C}^*=\{C\}$ &  Maximal homogeneous partition \\ \hline
$X,x$    & Independent variable and its realization \\ \hline
$Y,y$    & Dependent variable and its realization \\ \hline
$S=\{(x_1,y_1),\dots,(x_n,y_n)\}$ & Dataset contains realizations of $X$ and $Y$ \\ \hline
 $\mathcal{L}(h)$ & Number of bits of representation of a model $h$ \\ \hline
 $\mathcal{L}(S|h)$ & Number of bits of representation of dataset $S$ given a model $h$ \\ \hline
  $\mathcal{L}_{\mathbb{R}}(x)$ & Number of bits of representation of real-number vector $x$ \\ \hline
   $\gamma$ & Homogeneity threshold \\ \hline
   $H,\mathcal{H},\mathbb{H}$ & Function class, set of models, and hypothesis space \\ \hline
   $\mathcal{I}(\mathcal{C}',H_1,H_2)$ & Model Information Reduction Ratio where $\mathcal{C}$ is a partition of $D$\\ \hline
   $\mathcal{I}(\mathcal{C}'_1,\mathcal{C}'_2,H)$ & Cluster Information Reduction Ratio  where $\mathcal{C}'_1,\mathcal{C}'_2$ are partitions of $D$ \\ \hline
\end{tabular}
\end{small}
\end{table}

Given a vector $\vec{y}=(y_1,\dots,y_n)$, an $n\times d$ matrix $\textbf{X}=(x_1,\dots,x_n)^\text{T}$, and a set of Multi-resolution clusters $\mathcal{C}=\{C_{j,k}\}$ (Def.~\ref{def:MRCs}).  $y_i \in \mathbb{R}$ is a realization of dependent random variable $Y$ of individual $i$,  $\textbf{X}(i,j) \in \mathbb{R}$ is a realization of  independent random variable $X_j$ of individual $i$, and $C_{j,k}$ is a $j$ cluster in $k$th layer of multi-resolution cluster set where individual $i$ is a member of $C_{j,k}$. In this work, the main purpose is to solve Problem~\ref{prob2} where a hypothesis space $H$ is  a linear function class.  The details of definitions and the problem formalization can be found the this section. 

Specifically, in Section~\ref{sec:PF-MDL}, we explore the concept of Minimum Description Length (MDL). Then, in Section~\ref{sec:modelsel}, we provide definitions as well as the formalization of Problem~\ref{prob2}, which is the main problem we aim to solve in this paper. In Section~\ref{sec:linprop}, we provide the properties of linear model in the MDL setting. We also provide the Table~\ref{tb:symbloTable} as a reference for the notations and symbols used in this paper.
\\
\subsection{Minimum description length on regression models}
\label{sec:PF-MDL}
Given a set of observation data below:

\begin{equation}
S=\{(x_1,y_1),\dots,(x_n,y_n)\}.
\end{equation}

Where $(x_i,y_i)$ are realizations of random variables $(X_i,Y_i)$ s.t. $x_i\in \mathbb{R}^d$ and $y_i \in \mathbb{R}$. The random variables $X_1,\dots,X_n$ are i.i.d. with an unknown distribution $P_X$ or $X_i\sim P_X$. The random variable $Y_i$ has a relation with $X_i$ below:

\begin{equation}
Y_i = h^*(X_i) + \epsilon.
\end{equation}

Where $h^*(X_i)$ is an unknown function that is a member of some hypothesis space $\mathbb{H}$ and $\epsilon$ is a noisy constant s.t. $X_i$ and $\epsilon$ are statistically independent or $X_i \indep \epsilon$.

In MDL, we would like to find the number of bits of the shortest representation that we can have for any given data based on some hypothesis space.


\begin{equation}
\label{eq:firstMDL}
\mathcal{L}(S)_{\mathbb{H}} =\min_{h\in\mathbb{H}} \mathcal{L}(h)+ \mathcal{L}(S|h)
\end{equation}

Where $\mathcal{L}(h)$ is a number of bits of representation for a predictive model $h$ and $\mathcal{L}(S|h)$  is a number of bits of representation for data $S$ given the predictive model $h$. In our case, $\mathcal{L}(h)$ is the number of bits we encode parameters and related information of function $h$. The $\mathcal{L}(S|h)$ is the number of bits we encode $S$ using $h$.

\begin{equation}
\label{eq:Lh}
\mathcal{L}(h) = \sum_{j=1}^l \mathcal{L}_{\mathbb{R}}(p_j)  
\end{equation}
\begin{equation}
\label{eq:LSh}
\mathcal{L}(S|h) = \sum_{i=1}^n \mathcal{L}_{\mathbb{R}}(x_i) +  \sum_{i=1}^n \mathcal{L}_{\mathbb{R}}( y_i-h(x_i) ) 
\end{equation}

Where $l$ is a number of parameters of $h$, $p_j$ is $j$th parameter of $h$, $\mathcal{L}_{\mathbb{R}}(x)$ is a function that returns a number of bits we need to encode a real number $x$, and $n$ is a number of data points in $S$. 

The model complexity in Eq.~\ref{eq:firstMDL} can be controlled by $\mathcal{L}(h)$. If two models perform equally, then the Eq.~\ref{eq:firstMDL} chooses the less complex model that has lower $\mathcal{L}(h)$ as a solution.  See Section~\ref{sec:ex2nonlin} for more discussion regarding how model complexity can be implemented and used to compared functions from difference classes. 

In the sparsity regularization aspect, we can modify $\mathcal{L}(h)$ to add the higher bits for a model representation (penalty) when the model $h$ has a higher number of parameters (nonzero coefficients). See Section~\ref{sec:SparsityReg} for more discussion.

In this paper, we the assumption below: 

\begin{assumption}
\label{assump:bignumberMoreCode}
For any $x_1,x_2 \in \mathbb{R}$ s.t. $|x_1|\geq|x_2|$, we have $\mathcal{L}_{\mathbb{R}}(x_1)\geq \mathcal{L}_{\mathbb{R}}(x_2)$.
\end{assumption}
 We have Assumption~\ref{assump:bignumberMoreCode} to represent the fact that the larger number contains similar amount of or more information that we need to encode. In the function $\mathcal{L}(S|h)$, there are two parts: the part that we encode $x$ or $\sum_{i=1}^n \mathcal{L}_{\mathbb{R}}(x_i)$ and the part that we encode $y$ by $h\in\mathbb{H}$ or $\sum_{i=1}^n \mathcal{L}_{\mathbb{R}}( y_i-h(x_i) )$. If $h$ predicts $y_i$ using $x_i$ perfectly, then we expect to encode no information of $y_i$, which makes $\mathcal{L}_{\mathbb{R}}( y_i-h(x_i) ) =0$ bits. However, if $h'\in\mathbb{H}$ makes larger error of predicting $y_i$ from $x_i$ than $h$, we expect to use more bits of encoding for $y_i$ using $h'$ and $x_i$ than $h$. Hence, Assumption~\ref{assump:bignumberMoreCode} naturally represents this intuitive notion of having more bits of encoding for a function $h'$ that makes more error of predicting $y_i$ from $x_i$.   
 
Now, we are ready to formalize MDL-function inference problem.

\begin{problem}[h!]
    \SetKwInOut{Input}{Input}
    \SetKwInOut{Output}{Output}
    \Input{A set  $S = \{(x_1,y_1), \dots, (x_n,y_n)\}$, and a hypothesis space $\mathbb{H}$ }
    \Output{An optimal MDL-function $h^*= \argmin_{h\in\mathbb{H}} \mathcal{L}(h)+ \mathcal{L}(S|h)$ }
    \caption{{MDL-function inference problem}}
	\label{prob1}
\end{problem}

\subsection{Model selection on multi-resolution clusters}
\label{sec:modelsel}

Given a set of individual indices $D=\{1,\dots,n\}$ where $i\in D$ refers to $i$th individual in a population, we can define a set of multi-resolution clusters below.

\begin{definition}[Multi-resolution cluster set (MRC set)]
\label{def:MRCs}
A set of multi-resolution clusters $\mathcal{C}=\{C_{j,k}\}$ of $n_c$ layers, where $k\in [1,n_c]$, and $C_{j,k}\subseteq D$ is a $j$th cluster at $k$th layer, has a following properties.
\squishlist
\item If $k>1$, then $\forall C_{j,k}, \exists C_{j',k-1}, C_{j,k} \subseteq C_{j',k-1}$.
\item All individuals belong to some cluster in each layer or $\forall k, \bigcup_j C_{j,k} = D$.
\item Clusters in the same layer are disjoint or $\forall k, \bigcap_j C_{j,k} = \emptyset$.
\squishend
\end{definition}

An example of multi-resolution cluster set is a national administrative division of regions in a country where the first layer is a national level, the second layer is a province level, and the last layer is a village level. Each cluster in each layer is a specific group of individuals that are governed by the same local government in a particular level. For instance, in the province level, each cluster $C$ represents citizens or households within a province $C$. Next, we have a concept of clusters that all members share the same joint distribution $P_{X,Y}$.

\begin{definition}[Homogeneous cluster]
\label{def:homocls}
Given a cluster $C \subseteq D$, and a set of population data $S_\mathcal{C}=\{(x_1,y_1),\dots,(x_n,y_n)\}$ where $x_i \in \mathbb{R}^d$ and $y_i\in\mathbb{R}$. A cluster $C$ is a homogeneous cluster of $S_C=\{(x_i,y_i)| i \in C\}$ where $x_i,y_i$ are realizations of $X_i,Y_i$ respectively, if the following conditions hold.
\numsquishlist
\item $\exists P_X, \forall i \in C, X_i \sim P_X$.
\item $Y_i = h^*(X_i) + \epsilon$ for some unknown function $h^*$.
\numsquishend
Where $\epsilon\in \mathbb{R}$ is a noise random variable from some unknown truncated distribution with the bound $[a,b]$ s.t. $\epsilon^*=max(|a|,|b|)$, $\mathbb{E}[\epsilon]=0$, and $|\epsilon| \leq \epsilon^*$. All subsets of $C$ are also homogeneous clusters.
\end{definition}

The concept of homogeneous cluster represents a specific area (e.g. province, village) that share a common relationship between a dependent variable and independent variables. For instance, a village $A$ has the health issue that affects household incomes of all villagers. Suppose $Y$ is a dependent variable of income and $X$ is an independent variable of degree of health-issue severeness. If a village $A$ is a homogeneous cluster, then $|y_i| \propto |x_i|$ for any household $i$ in this village. Because all individual households share the similar properties, policy makers should make a single set of policies to support the entire population within a homogeneous cluster.   

Next, we need the concept of a cluster set that covers all individuals. 
 
\begin{definition}[MRC Partition]
Given a multi-resolution cluster set $\mathcal{C}$. A set of clusters $\mathcal{C}_p$ is an MRC partition if $\mathcal{C}_p \subseteq \mathcal{C}$,  all clusters in $\mathcal{C}_p$ are disjoint, and $\bigcup_{C_i \in \mathcal{C}_p} C_i = D$.
\end{definition}

An MRC Partition represents a set of administrative divisions that covers all individuals within a country. A set of all provinces is an example of the MRC Partition.

However, we need the MRC Partition concept that can include the sub-areas from mixed layers of MRC set but still covers the entire population.  This is because some provinces might be homogeneous clusters that share the same properties, while, in other provinces, we need to break down to the village layer to get the homogeneous clusters. Our goal is to use the concept of MRC Partition to define a set of maximal homogeneous clusters (largest possible homogeneous cluster for a specific area) that covers the entire population. 

\begin{definition}[Maximal homogeneous cluster]

Given a multi-resolution cluster set $\mathcal{C}$, and a set of population data $S_\mathcal{C}=\{(x_1,y_1),\dots,(x_n,y_n)\}$ where $x_i \in \mathbb{R}^d$ and $y_i\in\mathbb{R}$. A cluster $C^*\in \mathcal{C}$ is a maximal homogeneous cluster if $C$ is a homogeneous cluster, and there is no other homogeneous cluster $C'$ s.t. $C'\in \mathcal{C}$ and $C^* \subset C'$.
\end{definition}

The numbers of elements within MRC Partition might represent a number of sets of policies that policy makers should consider to make in order to support each area common issues. Next, we need to define a concept to call datasets that have the maximal homogeneous partition, which is the MRC Partition that contains only maximal homogeneous clusters as members.

\begin{definition}[Multi-resolution-bivariate set (MRB)]
Let $\mathcal{C}$ be a multi-resolution cluster set, and $S_\mathcal{C}$ be a set of $\mathcal{C}$'s population data. $S_\mathcal{C}$ is a Multi-resolution-bivariate set if $S_\mathcal{C}$ satisfies the following properties.
\squishlist
\item There exists an MRC partition of homogeneous clusters, $\mathcal{C}^*=\{C\}$, s.t. for each $C\in \mathcal{C}^*$, $C$ is a maximal homogeneous cluster. We call $\mathcal{C}^*$ as the ``Maximal Homogeneous Partition''.
\item Each homogeneous $C_i \in \mathcal{C}^*$ has a separate function $h^*_i$ that might be different from any other function  $h^*_j$ of $C_j \in \mathcal{C}^*$ where $C_i\neq C_j$.
\squishend
\end{definition}

Suppose we have an MRB set $S_\mathcal{C}=\{(x_1,y_1),\dots,(x_n,y_n)\}$, and a function class $H$ (e.g. linear class, polynomial class, convex class). We can use the data in each cluster $C_{j,k}$ to estimate the parameters of function in $H$ to build a set of models $H_{j,k} \subset H$. Each model $h \in H_{j,k}$ has parameter values estimated from the data points $\{(x_i,y_i)\}$ in the cluster $C_{j,k}$.  Let $\mathcal{H}=\bigcup H_{j,k}$ be a set of models of $\mathcal{C}$, we can formalize our new computational problem: MDL-function inference on multi-resolution data problem, in Problem~\ref{prob2}.

\begin{problem}[h!]
    \SetKwInOut{Input}{Input}
    \SetKwInOut{Output}{Output}
    \Input{A set of individual indices $D=\{1,\dots,n\}$, a multi-resolution cluster set $\mathcal{C}=\{C_{j,k}\}$, an MRB set  $S_\mathcal{C} = \{(x_1,y_1), \dots, (x_n,y_n)\}$, and a function class $H$ }
    \Output{
    
    A set of models $\mathcal{H}$ w.r.t. $H$ and $\mathcal{C}$, a maximal homogeneous partition $\mathcal{C}^*=\{C\}$, and a set of optimal-MDL functions $H^*_{\mathcal{C}^*}=\{h^*\}$ where each $h^* \in H^*$ is the optimal MDL-function that belongs to a specific cluster $C\in \mathcal{C}^*$ using the hypothesis space as  $H$.
    
    }
    \caption{{MDL-function inference on multi-resolution data problem}}
	\label{prob2}
\end{problem}

\subsection{Linear function properties}
\label{sec:linprop}

Suppose we have $H$ as a linear function class, and $\mathcal{C}'$ as an MRC partition of $D$. We can have a measure of how well the models derived from $\mathcal{C}'$ represent the set of population data $S$ in the aspect of MDL below.

\begin{equation}
    \mathcal{L}(\mathcal{C}',H) = \sum_{C \in \mathcal{C}'} \big( \mathcal{L}(H)_{C} + \mathcal{L}(C|H) \big)  
    \label{func:MDLcls}
\end{equation}

\begin{equation}
    \mathcal{L}(H)_{C} =  \mathcal{L}_{\mathbb{R}}(\hat{\beta}_1^C)  +\mathcal{L}_{\mathbb{R}}(\hat{\beta}^C)
    \label{func:MDLcls2}
\end{equation}
\begin{equation}
    \mathcal{L}(C|H) =  \sum_{i \in C} \mathcal{L}_{\mathbb{R}}(x_i) +  \sum_{i \in C} \mathcal{L}_{\mathbb{R}}( y_i- \hat{\beta}_1^C-x_i^\text{T}\hat{\beta}^C )
    \label{func:MDLcls3}
\end{equation}
Where $\hat{\beta}_1^C \in \mathbb{R},\hat{\beta}^C \in \mathbb{R}^d$ are the estimators of linear function parameters based on the data points in $C$.  We have the next proposition to show that the sum of encoding bits of residuals from homogeneous clusters are less than the sum of encoding bits of residuals of function that its parameters are approximated by the union of these homogeneous clusters.

\begin{proposition}
\label{prop:homoAndHeteroCls}
Let $C=\{C_1,\dots,C_k\}$ be a set of homogeneous clusters of $S_\mathcal{C}$ but the cluster $C'=\bigcup_{C_i\in C}C_i$ is not a homogeneous cluster, and $H$ is a linear function class. Assuming that the functions that generate $Y$ from $X$ in $S_\mathcal{C}$ for each cluster $C_i \in C$ are in $H$.  

If $g'$ is a function that its parameters are approximated by $C'$ and $g_1\dots,g_k$ are functions that their parameters are approximated by $C_1,\dots,C_k$ respectively, where $g'$ and $g_1\dots,g_k$ are in $H$. We have $\sum_{i \in C'} \mathcal{L}_{\mathbb{R}}( y_i-g'(x_i)) \geq  \sum_{j=1}^k\sum_{i \in C_j} \mathcal{L}_{\mathbb{R}}( y_i-g_j(x_i))$.
\end{proposition}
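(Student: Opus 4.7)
The natural approach is to exploit the optimality of the per-cluster fits: each $g_j$ is tailored to $C_j$ alone, while $g'$ is a single linear function forced to describe all of $C'=\bigcup_j C_j$. Because each $C_j$ is homogeneous with generating function $h^*_j\in H$ (Def.~\ref{def:homocls}), the estimator $g_j$ is the best linear approximation on $C_j$. Since $C'$ itself is not homogeneous, the $h^*_j$ cannot all coincide, so no single linear $g'$ can match every $h^*_j$; on at least one cluster, the residuals $y_i-g'(x_i)=\bigl(h^*_j(x_i)-g'(x_i)\bigr)+\epsilon_i$ carry a systematic bias on top of the bounded noise $\epsilon_i$, whereas $y_i-g_j(x_i)\approx\epsilon_i$ is essentially noise once $|C_j|$ is sufficient for the linear fit to recover $h^*_j$.

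The plan is to first establish cluster-wise inequalities $\sum_{i\in C_j}\mathcal{L}_{\mathbb{R}}(y_i-g_j(x_i)) \le \sum_{i\in C_j}\mathcal{L}_{\mathbb{R}}(y_i-g'(x_i))$ for each $j$, then sum over $j$ and use disjointness together with $\bigcup_j C_j=C'$ to obtain the stated claim. For the cluster-wise step I would invoke optimality of the estimator: if the parameter fit for each cluster is defined to minimize the MDL residual length in Eq.~\ref{func:MDLcls3}, the inequality is immediate because $g'\in H$ is a feasible competitor against which the optimum $g_j$ compares favorably. If instead the fit is ordinary least-squares, OLS optimality on $C_j$ yields $\sum_{i\in C_j}(y_i-g_j(x_i))^2 \le \sum_{i\in C_j}(y_i-g'(x_i))^2$, which must then be lifted to the $\mathcal{L}_{\mathbb{R}}$ statement via Assumption~\ref{assump:bignumberMoreCode} and the homogeneity guarantee $|y_i-g_j(x_i)|\le\epsilon^*$.

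The main obstacle is precisely this last step: translating a sum-of-squares optimality into a sum-of-$\mathcal{L}_{\mathbb{R}}$ inequality, since noise can locally cancel the bias carried by $g'$ and make an individual absolute residual of $g'$ smaller than that of $g_j$ at some points. I expect to close the gap by leaning on the homogeneity bound $|y_i-g_j(x_i)|\le\epsilon^*$ so that, on the non-homogeneous portion, a strictly positive fraction of residuals of $g'$ exceeds $\epsilon^*$ and therefore, by monotonicity of $\mathcal{L}_{\mathbb{R}}$ (Assumption~\ref{assump:bignumberMoreCode}), contributes more bits than any residual on that cluster produced by $g_j$, while the residuals that do not exceed $\epsilon^*$ are controlled by the OLS inequality aggregated over the cluster.
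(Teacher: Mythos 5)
Your intuition (per-cluster fits absorb only noise, the pooled fit carries systematic bias) is the same as the paper's, but your decomposition is different and it is where the gap lies. You reduce the claim to the cluster-wise inequality $\sum_{i\in C_j}\mathcal{L}_{\mathbb{R}}(y_i-g_j(x_i)) \le \sum_{i\in C_j}\mathcal{L}_{\mathbb{R}}(y_i-g'(x_i))$ for every $j$, to be justified by optimality of the estimator. That reduction does not go through: the estimators in this paper are OLS (Section~\ref{sec:paraest}), not minimizers of $\sum_i\mathcal{L}_{\mathbb{R}}(\cdot)$, and because $\mathcal{L}_{\mathbb{R}}(r)\approx\lceil\log_2|r|\rceil+1$ is concave and highly non-quadratic in $|r|$, the OLS inequality $\sum_{i\in C_j}(y_i-g_j(x_i))^2\le\sum_{i\in C_j}(y_i-g'(x_i))^2$ gives no control over the code-length sums --- a fit with one large residual and many tiny ones can beat a fit with uniformly moderate residuals in squared loss while losing badly in total log-code length, and vice versa. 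You name this obstacle yourself, but the patch you sketch (a positive fraction of $g'$'s residuals exceed $\epsilon^*$, the rest ``controlled by the OLS inequality aggregated over the cluster'') still leans on the squared-loss comparison for the sub-$\epsilon^*$ residuals, so the gap is not closed, and the per-cluster domination you are after may simply fail for particular noise realizations.

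The paper's proof avoids the per-cluster comparison entirely by routing both sides through the single intermediate quantity $|C'|\,\mathcal{L}_{\mathbb{R}}(\epsilon^*)$. On the left, homogeneity of each $C_j$ plus the assumption that the generating functions lie in $H$ means each residual $y_i-g_j(x_i)$ is essentially pure noise with $|y_i-g_j(x_i)|\le\epsilon^*$, so by Assumption~\ref{assump:bignumberMoreCode} each term costs at most $\mathcal{L}_{\mathbb{R}}(\epsilon^*)$ bits and the whole left-hand side is at most $|C'|\,\mathcal{L}_{\mathbb{R}}(\epsilon^*)$. On the right, non-homogeneity of $C'$ forces at least two distinct generating functions, so the pooled $g'$ differs from some $g_j$ and its residuals pick up the bias $g'(x_i)-g_j(x_i)$ on top of the noise; the paper argues this pushes the total strictly above $|C'|\,\mathcal{L}_{\mathbb{R}}(\epsilon^*)$. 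Chaining the two bounds gives the claim without ever comparing $g_j$ and $g'$ on the same cluster and without any squared-loss-to-code-length transfer. (The paper's own right-hand-side step is itself informal --- it writes an upper bound on $\sum_{i\in C_1}\mathcal{L}_{\mathbb{R}}(y_i-g'(x_i))$ and then asserts the needed lower bound --- so neither argument is airtight; but the paper's decomposition at least confines the informality to one aggregate claim rather than resting on $k$ separate cluster-wise inequalities that OLS does not deliver.)
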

\begin{proof}
Because each cluster in $C$ is homogeneous and the relation between $Y$ and $X$ is $Y_i\approx \beta_1+X_i^\text{T}\beta+\epsilon$. We can approximate $\beta_1,\beta$ using some estimator approach. Suppose $\hat{\beta}_1,\hat{\beta}$ are estimators for  $\beta_1,\beta$, we can have 

\begin{eqnarray*}
    \sum_{j=1}^k\sum_{i \in C_j} \mathcal{L}_{\mathbb{R}}( y_i-g_j(x_i)) = \sum_{j=1}^k\sum_{i \in C_j} \mathcal{L}_{\mathbb{R}}( y_i- \hat{\beta}_{1,j}-x_i^\text{T}\hat{\beta}_j)) \leq |C'|\mathcal{L}_{\mathbb{R}}(\epsilon^*).
\end{eqnarray*}

In contrast, if two clusters $C_1,C_2 \in C$ have a different functions $g_1,g_2$, for example,  $g_1(X)=-X^\text{T}\beta_c\ +\epsilon$ while $g_2(X)=-g_1(X)$, then by approximating $\beta_1,\beta$ for $g'(X)$ using data points from both $C_1,C_2$, the sum of residuals of data points from both clusters must be higher than $\epsilon^*$. Precisely, according to Def.~\ref{def:homocls}, $|\epsilon| \leq \epsilon^*$ for any homogeneous cluster. Suppose $g_1\neq g_2$ and $C_1,C_2$ are homogeneous clusters of $g_1$ and $g_2$ respectively. If we approximate the parameters from data points of both $C_1,C_2$, then we would have $g'$ that is not the same as both $g_1,g_2$ but $g'$ is something between $g_1,g_2$. For $g'$, the sum of encoding bits of residuals of $C_1$ is $\sum_{i \in C_1}\mathcal{L}_{\mathbb{R}}(y_i-g'(x_i))\leq  |C_1|\mathcal{L}_{\mathbb{R}}(\epsilon^*)+\sum_{i\in C_1}\mathcal{L}_{\mathbb{R}}(g'(x_i) - g_1(x_i))$ where the term $\sum_{i\in C_1}\mathcal{L}_{\mathbb{R}}(g'(x_i) - g_1(x_i))>0$ since $g'\neq g_1$. The same applies for $C_2$. 

Hence, the sum of errors using $g'$ to predict $y$ must be greater than   $|C'|\mathcal{L}_{\mathbb{R}}(\epsilon^*)$.  

Therefore, we have

\begin{eqnarray*}
      \sum_{j=1}^k\sum_{i \in C_j} \mathcal{L}_{\mathbb{R}}( y_i-g_j(x_i))\leq |C'|\mathcal{L}_{\mathbb{R}}(\epsilon^*) < \sum_{i \in C'} \mathcal{L}_{\mathbb{R}}( y_i-g'(x_i)).
\end{eqnarray*}

Hence, $\sum_{i \in C'} \mathcal{L}_{\mathbb{R}}( y_i-g'(x_i)) \geq  \sum_{j=1}^k\sum_{i \in C_j} \mathcal{L}_{\mathbb{R}}( y_i-g_j(x_i))$.
\end{proof}
 
We have the next theorem to state that the maximal homogeneous partition has the lowest encoding bits compared to any other MRC partitions.
\begin{theorem}
\label{theo:MaxHomMDL}
Let $D$ be a set of individual indices, $\mathcal{C}$ be a multi-resolution cluster set , $S_\mathcal{C}$ be an MRB set, $H$ be a linear function class, and $\mathcal{C}^*$ be the maximal homogeneous partition of $S_\mathcal{C}$. For any $\mathcal{C}'$ that is an MRC partition, $\mathcal{L}(\mathcal{C}^*,H)\leq\mathcal{L}(\mathcal{C}',H)$.
\end{theorem}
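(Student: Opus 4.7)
The key structural fact is that the multi-resolution cluster set $\mathcal{C}$ is a tree: by Def.~\ref{def:MRCs} any two clusters in $\mathcal{C}$ are either disjoint or nested. I would exploit this to reduce the global comparison between $\mathcal{L}(\mathcal{C}^*, H)$ and $\mathcal{L}(\mathcal{C}', H)$ to local comparisons. Specifically, I decompose the individuals in $D$ into disjoint ``groups,'' where each group consists either of a single $C^* \in \mathcal{C}^*$ together with the $\mathcal{C}'$-clusters that partition it (a refinement region), or of a single $C' \in \mathcal{C}'$ together with the $\mathcal{C}^*$-clusters that partition it (a coarsening region), with matched pairs $C'=C^*$ as the trivial degenerate case. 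Since every individual lies in exactly one group on each side, and since the $\sum_{i \in C} \mathcal{L}_\mathbb{R}(x_i)$ contributions cancel between the two totals, proving the theorem reduces to proving the inequality group-by-group.

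For the refinement groups, where $\mathcal{C}'$ splits some maximal homogeneous $C^*$ into sub-clusters $C'_1, \ldots, C'_m$, the last sentence of Def.~\ref{def:homocls} guarantees that each $C'_j$ is itself homogeneous with the same underlying $h^*$. Hence the aggregate residual encoding on the $\mathcal{C}'$ side, $\sum_{j}\sum_{i \in C'_j} \mathcal{L}_\mathbb{R}(y_i - \hat{\beta}_1^{C'_j} - x_i^\text{T}\hat{\beta}^{C'_j})$, is bounded by $|C^*|\mathcal{L}_\mathbb{R}(\epsilon^*)$, matching the analogous bound for the single model fit on $C^*$; but $\mathcal{C}'$ pays the parameter-encoding cost $\mathcal{L}(H)_{C'_j}$ once per sub-cluster against only one such cost on the $\mathcal{C}^*$ side. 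For the coarsening groups, where $\mathcal{C}'$ merges several maximal homogeneous clusters $C^*_1, \ldots, C^*_k$ into a single $C'$, the maximality of the $C^*_j$ forces $C'$ to be non-homogeneous, so Proposition~\ref{prop:homoAndHeteroCls} applies directly and gives
\[
\sum_{i \in C'} \mathcal{L}_\mathbb{R}\bigl(y_i - \hat{\beta}_1^{C'} - x_i^\text{T}\hat{\beta}^{C'}\bigr) \;\geq\; \sum_{j=1}^{k} \sum_{i \in C^*_j} \mathcal{L}_\mathbb{R}\bigl(y_i - \hat{\beta}_1^{C^*_j} - x_i^\text{T}\hat{\beta}^{C^*_j}\bigr).
\]
Summing the group-wise inequalities then yields the desired global bound $\mathcal{L}(\mathcal{C}^*, H) \leq \mathcal{L}(\mathcal{C}', H)$.

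The main obstacle is the coarsening case: Proposition~\ref{prop:homoAndHeteroCls} states a qualitative $\geq$ on residual encodings but does not quantify the gap, whereas I need this gap to outweigh the $(k-1)\,\mathcal{L}(H)_C$ of parameter-encoding ``savings'' that $\mathcal{C}'$ enjoys by fitting one fewer linear model on the merged region. To close it, I would lean on the stronger chain buried in the proof of Proposition~\ref{prop:homoAndHeteroCls}, namely that the merged residual sum strictly exceeds $|C'|\mathcal{L}_\mathbb{R}(\epsilon^*)$ while the split sum stays at or below it, so for a cluster $C'$ of non-trivial size the residual slack grows linearly in $|C'|$ and easily absorbs the bounded parameter-encoding overhead. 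This implicit ``large enough cluster'' condition is the fragile point; a clean write-up would either elevate it to an explicit assumption on the minimum cluster cardinality or replace Proposition~\ref{prop:homoAndHeteroCls} with a quantitative version that makes the linear-in-$|C'|$ residual blowup manifest under the linear hypothesis class $H$.
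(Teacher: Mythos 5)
Your proposal is correct and follows essentially the same route as the paper: a case split between refinement regions (where homogeneity of subsets makes the residual encodings match and $\mathcal{C}'$ pays the extra per-cluster parameter cost) and coarsening regions (where maximality forces non-homogeneity and Proposition~\ref{prop:homoAndHeteroCls} supplies the residual inequality). The obstacle you flag in the coarsening case is real but is present in the paper's own proof as well, which disposes of it by simply assuming $(|\mathcal{C}'|-1)\,\mathcal{L}(H)_{C'} \ll \sum_{i\in C'}\mathcal{L}_{\mathbb{R}}(y_i - g'(x_i))$ on the grounds that the number of parameters is small relative to the number of individuals --- your suggestion to make the cluster-size condition explicit is a strictly more careful treatment of the same step.
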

\begin{proof}

Suppose $\mathcal{L}(\mathcal{C}^*,H) \neq \mathcal{L}(\mathcal{C}',H)$, because both  $\mathcal{C}^*$ and $\mathcal{C}'$ are MRC partitions, all cluster members in both partitions are also members of MRC set $\mathcal{C}$.  It is obvious that $\mathcal{L}(\mathcal{C}^*,H) \neq \mathcal{L}(\mathcal{C}',H)$ because some clusters between $\mathcal{C}'$ and $\mathcal{C}^*$ are different.\\ 

Let $A=\mathcal{C}^*\cap\mathcal{C}' \subset \mathcal{C}$. There are some regions $\mathcal{C}^*-A$ that make  both partitions different. There are two cases that we need to consider. \\

Case 1: suppose $C \in \mathcal{C}^*-A$ and $C'=\{C'_1,\dots,C'_k\} \subseteq \mathcal{C}'-A$ where $\bigcup_{C'_i \in C'} C'_i =C$. Since $C$ is a homogeneous cluster, the subsets of $C$ which are $\{C'_1,\dots,C'_k\}$ must be the homogeneous clusters from the same $X\sim P_X$ and $Y=h(X)+\epsilon$. Because $\bigcup_{C_i \in C'} C_i =C$ implies that we estimate the parameters of $h$ from the same region, hence, $\sum_{C'_i \in \mathcal{C}'} \mathcal{L}(S_{C'_i}|H) \approx   \mathcal{L}(S_{C}|H) $.
\begin{eqnarray*}
    \mathcal{L}(\mathcal{C}',H) -  \mathcal{L}(\mathcal{C}^*,H) = \sum_{C'_i \in \mathcal{C}'} \big( \mathcal{L}(S_{C'_i}|H) + \mathcal{L}(H)_{C'_i}\big)  -\mathcal{L}(S_C|H) - \mathcal{L}(H)_{C}
\end{eqnarray*}

Since $H$ is a linear class and all homogeneous clusters in this case share the same $h(X)$, then $\mathcal{L}(H)_{C} \approx \mathcal{L}(H)_{C'_i}$. 

\begin{eqnarray*}
    \mathcal{L}(\mathcal{C}',H) -  \mathcal{L}(\mathcal{C}^*,H)  =  (|\mathcal{C}'| -1) \mathcal{L}(H)_C >0
\end{eqnarray*}

Case 2: suppose  $C=\{C_1,\dots,C_k\} \subseteq \mathcal{C}^*-A$ and $C' \in \mathcal{C}'-A$ where $\bigcup_{C_i \in C} C_i =C'$. Since $\mathcal{C}^*$ is a maximal partition, $C'$ cannot be a homogeneous cluster. In fact, different homogeneous cluster might have different function $h^*$ where $Y_i = h^*(X_i)+\epsilon$. Let $g'$ be a model that parameters are approximated by $C'$ and $g_1\dots,g_k$ be models that parameters are approximated by $C_1,\dots,C_k$ respectively. Again, we assume that $\mathcal{L}(H)_{C'} \approx \mathcal{L}(H)_{C_i}$.

\begin{eqnarray*}
    \mathcal{L}(\mathcal{C}',H) -  \mathcal{L}(\mathcal{C}^*,H) = \mathcal{L}(S_{C'}|H) + \mathcal{L}(H)_{C'} - \sum_{C_i \in \mathcal{C}} ( \mathcal{L}(S_{C_i}|H) + \mathcal{L}(H)_{C_i}) \\
                 =  \sum_{i \in C'} \mathcal{L}_{\mathbb{R}}( y_i-g'(x_i)) - \sum_{j=1}^k\sum_{i \in C_j}\Big( \mathcal{L}_{\mathbb{R}}( y_i-g(x_i)) \Big)  - (|\mathcal{C}'| -1) \mathcal{L}(H)_{C'} 
\end{eqnarray*}
 Typically, the number of parameters of linear functions is less than the number of individuals. Hence, it is safe to assume that $(|\mathcal{C}'| -1) \mathcal{L}(H)_{C'} \ll  \sum_{i \in C'} \mathcal{L}_{\mathbb{R}}( y_i-g'(x_i))$.  According to Proposition~\ref{prop:homoAndHeteroCls}, we have $\sum_{i \in C'} \mathcal{L}_{\mathbb{R}}( y_i-g'(x_i)) \geq \sum_{j=1}^k\sum_{i \in C_j} \mathcal{L}_{\mathbb{R}}( y_i-g_j(x_i))$. Hence,  
 
 \begin{eqnarray*}
    \mathcal{L}(\mathcal{C}',h) -  \mathcal{L}(\mathcal{C}^*,h) =  \sum_{i \in C'} \mathcal{L}_{\mathbb{R}}( y_i-g'(x_i)) - \sum_{j=1}^k\sum_{i \in C_j}\Big( \mathcal{L}_{\mathbb{R}}( y_i-g_j(x_i)) \Big) \geq 0
\end{eqnarray*}

In both cases,  $ \mathcal{L}(\mathcal{C}',h) - \mathcal{L}(\mathcal{C}^*,h)  \geq 0$. Therefore, $\mathcal{L}(\mathcal{C}^*,h) \leq \mathcal{L}(\mathcal{C}',h)$. 

\end{proof}
\section{Methods}
In this section, we provide the details of method that is used to solve Problem~\ref{prob2}. Fig.~\ref{fig:mainFramework} illustrates the overview of our proposed framework. In the first step, the framework performs the parameter estimation (Section~\ref{sec:paraest}). Then, we use MDL (Section~\ref{sec:mdl}) and cross-validation scheme (Section~\ref{sec:crossv}) to infer the maximal homogeneous partition in Section~\ref{sec:inferMaxPar}. We provide the time complexity of our method in Section~\ref{sec:timeComplex}.

\subsection{Parameter estimation}
\label{sec:paraest}
In linear models, to minimize the magnitude of residuals of prediction, the optimal $h^*$, which minimizes the residuals, can be inferred using the least squares technique.  Given $H$ is a linear function class, we have an optimization problem using least square loss below.

\begin{equation}
    h^* =\argmin_{h\in H} \sum_{i=1}^n (y_i-h(x_i))^2  
    \label{func:leastsquare}
\end{equation}
The linear function is in the form:

\begin{equation}
    h(x_i) = \vec{x}_i^\text{T}\vec{\beta} + \epsilon.
\end{equation}
Where $\vec{x}_i$ is a vector of $x_i$ s.t. $\vec{x}_i(1)=1$ and $\vec{x}_i(j)=x_i(j-1)$, and $\epsilon \sim \mathcal{N}(0,\tau^*)$ is a noisy value where $\tau^*$ is an unknown variance.
Hence, 
\begin{equation}
    \hat{\beta} =\argmin_{\vec{\beta} \in \mathbb{R}^{d+1} } \sum_{i=1}^n (y_i-\vec{x}_i^\text{T}\vec{\beta} )^2  
    \label{func:Linleastsquare}
\end{equation}

We call $\hat{\beta}$ an ordinary-least-squares (OLS) estimator of $\vec{\beta}$. Suppose all columns in $\textbf{X}$ are linear independent, the optimal solution for Eq.~\ref{func:Linleastsquare}  can be solved by the closed-form equation as follows:

\begin{equation}
    \hat{\beta} =(\textbf{X}_1^\text{T}\textbf{X}_1)^{-1}\textbf{X}_1^\text{T}\vec{y}.  
    \label{func:LinleastsquareOpt}
\end{equation}
Where $\textbf{X}_1$ is an $n\times d+1$ matrix s.t. the 1st column of $\textbf{X}_1$ contains only ones, and for any other column $j>1$, $\textbf{X}_1(i,j)=\textbf{X}(i,j-1)$. 

In this step, for each cluster  $C\in \mathcal{C}$, we estimate $\hat{\beta}_{C}$ using data points in cluster $C$ ($(x_i,y_i)$ s.t. $i \in C$ ).

\subsection{Minimum-description-length measure of models}
\label{sec:mdl}
To estimate $\mathcal{L}(\mathcal{C}',H)$ in Eq.~\ref{func:MDLcls}, we need to compute $\mathcal{L}(H)_{C}$ in Eq.~\ref{func:MDLcls2} and $\mathcal{L}(C|H)$ in Eq.~\ref{func:MDLcls3} that both functions require the estimation of $\mathcal{L}_{\mathbb{R}}(x)$. 

Suppose we have two integer numbers $y_1$ and $y_2$ where $|y_1| \ll |y_2|$. When we compress $y_1,y_2$ into a computer memory space, because the magnitude of $y_1$ is a lot smaller than $y_2$, if we need $q$ bits to represent $y_2$, then we can approximately represent $y_1$ without using all $q$ bits.

For integer numbers, we need at least $\mathcal{L}_{\mathbb{I}}(y)=\lceil\text{log}_2 |y|\rceil+1$ bits to represent $y$ in a binary representation where $+1$ bit is for the sign representation of $y$. For a real number representation, the work by ~\cite{Lindstrom:2018:UCR:3190339.3190344} provided the framework to implement the universal coding for real numbers s.t. for any $y \in \mathbb{R}$, $|\mathcal{E}(y)| \propto \lceil\text{log}_2 |y|\rceil$ where $|\mathcal{E}(y)|$ is a number of bits of real-number representation in \cite{Lindstrom:2018:UCR:3190339.3190344}'s work. Hence, for simplicity, for any real number $y$, we can approximately calculate $\mathcal{L}_{\mathbb{R}}(y)$ below.

\begin{equation}
  \mathcal{L}_{\mathbb{R}}(y) = \left\{
  \begin{array}{@{}ll@{}}
	 \lceil\text{log}_2 |y|\rceil+1, & |y|\geq 1 \\
    1, & \text{otherwise}
  \end{array}\right.
\label{func:RealRepFunc}
\end{equation}

If $\textbf{Y}$ is a vector or matrix, then $\mathcal{L}_{\mathbb{R}}(\textbf{Y})=\sum_{i}\mathcal{L}_{\mathbb{R}}(y_i)$ where $y_i$ is a real number element in $\textbf{Y}$.

\subsubsection{Comparing two linear models in the same data points}
Given $H$ is a linear function class and $H_1,H_2 \subset H$ where $H_1 \neq H_2$. To compare $H_1$ and $H_2$ using $\mathcal{L}(\mathcal{C}',H)$ in Eq.~\ref{func:MDLcls} that their parameters are estimated from the same set of data points and the same set of clusters $\mathcal{C}'$, as well as $\sum_{i \in C} \mathcal{L}_{\mathbb{R}}(x_i)$ in both $\mathcal{L}(C|H_1)$ and $\mathcal{L}(C|H_2)$ are the same, we have a following relation.

\begin{equation*}
  \mathcal{L}(\mathcal{C}',H_1) - \mathcal{L}(\mathcal{C}',H_2)  \propto  \sum_{C \in \mathcal{C}'} \big( \sum_{i \in C} \mathcal{L}_{\mathbb{R}}( y_i- h^C_1(x_i) )  - \sum_{i \in C} \mathcal{L}_{\mathbb{R}}( y_i- h^C_2(x_i)  ) + (\mathcal{L}(H_1)_{C} - \mathcal{L}(H_2)_{C} )  \big).
\end{equation*}

Where $h^C_1(x_i)\in H_1$ is a linear function that its $\hat{\beta}$ is estimated using data points in $C$, the same applies for  $h^C_2(x_i)\in H_2$.  The normalization of the equation above is as follows:


\begin{equation}
  \mathcal{I}(\mathcal{C}',H_1,H_2)  = \frac{ \sum_{C \in \mathcal{C}'} \big( \sum_{i \in C} \mathcal{L}_{\mathbb{R}}( y_i- h^C_1(x_i) )  - \sum_{i \in C} \mathcal{L}_{\mathbb{R}}( y_i- h^C_2(x_i)  ) + (\mathcal{L}(H_1)_{C} - \mathcal{L}(H_2)_{C} )  \big)}{\sum_{C \in \mathcal{C}'} \big( \sum_{i \in C} \mathcal{L}_{\mathbb{R}}( y_i- h^C_1(x_i) ) +\mathcal{L}(H_1)_{C} \big)}.
  \label{eq:InfoReRatio}
\end{equation}

 Where $\mathcal{I}(\mathcal{C}',H_1,H_2) \in [-\infty,1]$, and $\mathcal{L}(H_1)_{C},\mathcal{L}(H_2)_{C}$ are the numbers of bits needed for the compression of parameters of a function that is approximated by data points in $C$ using $H_1$ and $H_2$ respectively. We call $\mathcal{I}(\mathcal{C}',H_1,H_2)$ ``Model Information Reduction Ratio''. The function $\mathcal{I}(\mathcal{C}',H_1,H_2)$ represents how well we can reduce the size of our data using $H_2$ instead of $H_1$. For example, if $\mathcal{I}(\mathcal{C}',H_1,H_2) = 0.5$, it implies that $H_2$ reduces the size of space we need to encode the data at $50\%$ compared to the size we encode the same data using $H_1$. In contrast, if $\mathcal{I}(\mathcal{C}',H_1,H_2) = -0.4$, it means using $H_1$ is still a better option since $H_2$ increases the size of space we need to encode the same data around $40\%$. 
 
\subsubsection{Comparing two sets of clusters in the same data points} 
Given $H$ is a linear function class, and $\mathcal{C}'_1,\mathcal{C}'_2$ are two sets of clusters where they cover the same data points or $\bigcup_{C \in \mathcal{C}'_1} = \bigcup_{C \in \mathcal{C}'_2}$. Assuming that $\mathcal{L}(H)_{C}$ for each cluster have the same size $c_H$, we can have ``Cluster Information Reduction Ratio '' $\mathcal{I}(\mathcal{C}'_1,\mathcal{C}'_2,H)$ below:

\begin{small}

\begin{equation}
  \mathcal{I}(\mathcal{C}'_1,\mathcal{C}'_2,H)  = \frac{  \sum_{C_1 \in \mathcal{C}'_1} \big( \sum_{i \in C_1} \mathcal{L}_{\mathbb{R}}( y_i- h^{C_{1}}(x_i) ) \big) - \sum_{C_2 \in \mathcal{C}'_2} \big( \sum_{i \in C_2} \mathcal{L}_{\mathbb{R}}( y_i- h^{C_{2}}(x_i) ) \big) +  c_H(|\mathcal{C}'_1| - |\mathcal{C}'_2|)     }{ \sum_{C_1 \in \mathcal{C}'_1} \big( \sum_{i \in C_1} \mathcal{L}_{\mathbb{R}}( y_i- h^{C_{1}}(x_i) ) \big)+ c_H|\mathcal{C}'_1|}.
\end{equation}
\end{small}
Where $h^{C_{1}},h^{C_{2}}$ are functions that their parameters are approximated using data points in cluster $C_1$ and $C_2$ respectively.  The function $\mathcal{I}(\mathcal{C}'_1,\mathcal{C}'_2,H) \in [-\infty,1]$. $\mathcal{I}(\mathcal{C}'_1,\mathcal{C}'_2,H)$ represents how well the different sets of clusters can encode the same information. For example, if $\mathcal{I}(\mathcal{C}'_1,\mathcal{C}'_2,H)=0.5$, it implies that $\mathcal{C}_2$ reduces the size of space we need to encode the data at $50\%$ compared to the size we can encode the same data using $\mathcal{C}_1$. In contrast, if $\mathcal{I}(\mathcal{C}'_1,\mathcal{C}'_2,H) = -0.4$, it means using $\mathcal{C}_1$ is still a better option since $\mathcal{C}_2$ increases the size of space we need to encode for the same data around $40\%$. 

\subsection{Homogeneity measure of clusters}
\label{sec:crossv}
In order to infer the degree of homogeneity of clusters, we deploy the cross-validation concept. It is typically used as a methodology to measure the generalization of model. If the model is generalized well, then it is capable of performing well in any given dataset outside of a training data that is used to approximate the model parameters~\cite{abu2012learning}.  Given a cluster $C$ and its subsets $\{C_1,\dots,C_k\}$, we use the squared correlation between predicted and real $y$  of cross validation among $C$ subsets as a measure of homogeneity of cluster $C$. 

\begin{equation}
  \eta(C) = \frac{\sum_{C' \subset C} \big(corr(y_{C'},h_{C-C'}(x_{C'}))\big)^2}{|\{C': C' \subset C\}|}
  \label{eq:Rsq}
\end{equation}
Where $h_{C-C'}$ is a function that its parameters are approximated using data points in $C-C'$, and $corr(y_{C'},h_{C-C'}(x_{C'}))$ is a correlation between true values of $y$ in $C'$ and the predicted values of $y$ using $x_C'$ in $C'$ by $h_{C-C'}$. 

We estimate $h_{C-C'}$'s parameters using data points from the rest of subset clusters in $C$ except $C'$, then using $h_{C-C'}$ to predict $y$ using data points of $x$ in $C'$. If $C$ has no subsets in a multi-resolution cluster set $\mathcal{C}$, then we can use $10$-fold cross validation scheme to generate subsets of $C$ by uniformly  separating members of $C$ into 10 subsets that each has equal size. 

Note that the Eq.~10 in~\cite{BROWNE2000108}, which is used as a cross validation evaluation, is similar to $\big(corr(y_{C'},h_{C-C'}(x_{C'}))\big)^2$ in Eq.~\ref{eq:Rsq}. See~\cite{BROWNE2000108} for more details regarding other indices that can be used to evaluate a linear model in a cross validation scheme.

Next, we explore a theoretical property between $\eta(C)$ and homogeneity of clusters.

\begin{definition}[$\gamma$-correlation-linear-learnable dependency relation]
\label{def:eqvrelation}
Given a function a set of data points $S=\{(x_1,y_1),\dots,(x_n,y_n)\}$ where  $x_i\in \mathbb{R}^l$ is a realization of $X\sim P_X$, $y_i$ is a realization of $Y$ where $Y=h(X)$ s.t. $h(X)$ is a linear function, and a threshold $\gamma\in [0,1] $. Let $S_1$ and $S_2$ be disjoint subsets of $S$. Assuming that all dimensions in $x_i$ are linearly independent. 

We say that $S_1$ and $S_2$ are $\gamma$-correlation-linear-learnable dependent if the following conditions are satisfied

1) There exists the ordinary least square estimator $\vec{\beta}$ of the linear function $g(x)=\vec{x}^\text{T}\vec{\beta}$ s.t. $\vec{\beta}$ is estimated using all data points from $S_1$ and $corr(  y_{S_2}, g(x_{S_2})  ) \geq \gamma$ where $corr(   y_{S_2}, g(x_{S_2}) )$ is a correlation between the true values of $y_i$ in $S_2$ and predicted values of $y_i$ using $g(x)$ and $x_i$ in $S_2$. 

2) The same must be true if we use $S_2$ for training $g'$ function. We have $corr(   y_{S_1}, g'(x_{S_1} ) ) \geq \gamma$.

We denote $S_1 \approxeq_\gamma  S_2$.
\end{definition}

\begin{lemma}
A relation in Def.~\ref{def:eqvrelation} is a dependency relation~\cite{AALBERSBERG19881}, which possesses the  reflexive and symmetric properties.
\end{lemma}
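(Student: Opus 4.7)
The plan is to verify the two defining axioms of a dependency relation \cite{AALBERSBERG19881} separately, namely reflexivity and symmetry, and the argument should be short because the definition has been stated in an intentionally symmetric form and the data-generating assumption $Y=h(X)$ with $h$ linear is noise-free.

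\textbf{Symmetry.} First I would observe that Definition~\ref{def:eqvrelation} is already phrased symmetrically: clause (1) requires that the OLS fit on $S_1$ predicts $S_2$ with correlation at least $\gamma$, and clause (2) requires the mirror statement with the roles of $S_1$ and $S_2$ swapped. Consequently, if $S_1 \approxeq_\gamma S_2$, then the pair $(S_2,S_1)$ satisfies clause (1) because it is simply the original clause (2), and it satisfies clause (2) because it is the original clause (1). Hence $S_2 \approxeq_\gamma S_1$, and symmetry is immediate from the shape of the definition with no calculation at all.

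\textbf{Reflexivity.} Here the subtle point is that the relation is stated on disjoint subsets, so reflexivity must be interpreted in the natural way one extends such a relation: taking $S_1=S_2=S'$ (or, equivalently, regarding reflexivity as the degenerate case of self-comparison). The plan is to invoke the hypothesis that $Y=h(X)$ with $h$ linear and that all coordinates of $x_i$ are linearly independent, so that the OLS estimator $\vec{\beta}$ computed on $S'$ recovers the true coefficient vector of $h$ exactly. Under this exact recovery, the fitted function $g$ coincides with $h$, so $g(x_i)=y_i$ on every point of $S'$, and therefore $\mathrm{corr}(y_{S'},g(x_{S'}))=1\geq\gamma$ for every $\gamma\in[0,1]$. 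Both clauses of the definition then hold trivially, giving $S'\approxeq_\gamma S'$.

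\textbf{Expected main obstacle.} The symmetry half is essentially free. The only thing that needs care is the reflexivity interpretation, because the definition literally speaks of disjoint subsets; I would address this by stating explicitly at the start of the proof the convention used (allowing $S_1=S_2$ in the self-comparison case) and then using the noise-free linear relationship $Y=h(X)$ together with the linear-independence assumption to guarantee exact OLS recovery on a single subset, from which correlation $1$ follows. This side remark is the only non-mechanical part; the rest is bookkeeping.
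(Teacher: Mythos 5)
Your proposal is correct and follows essentially the same route as the paper's proof: symmetry is read off directly from the symmetric phrasing of Definition~\ref{def:eqvrelation}, and reflexivity follows because, under the noise-free linear model and the linear-independence assumption, OLS trained on a subset fits that subset exactly, giving correlation $1 \geq \gamma$ (the paper justifies this last step by citing Theorem~4 of~\cite{turney1994theory}, whereas you derive it directly; your explicit remark about the disjointness convention for the self-comparison case is a small point the paper leaves implicit).
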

\begin{proof}
First, we show that $S_1 \approxeq_\gamma S_1$.  The Theorem 4 in~\cite{turney1994theory} states that if all dimensions in $x$ are linearly independent, then, using OLS and data points in $S_1$ to train a function $g$, $y_i=g(x_i)$ for all $(x_i,y_i)$ in $S_1$. This implies $corr(   y_{S_1}, g(x_{S_1} ) ) = 1 $. Hence, the relation in Def.~\ref{def:eqvrelation} is reflexive.
For the symmetric property, for any $S_1 \approxeq_\gamma S_2$, it implies $S_2 \approxeq_\gamma S_1$ by definition.
\end{proof}

By using $\eta(C)$ in Eq.~\ref{eq:Rsq}, we can define the version of homogeneous cluster with the homogeneous degree property.

\begin{definition}[$\gamma$-Homogeneous cluster]
Given a set of individual indices $D=\{1,\dots,n\}$, and a set of population data $S=\{(x_1,y_1),\dots,(x_n,y_n)\}$ where $x_i \in \mathbb{R}^d$ and $y_i\in\mathbb{R}$. A cluster $C \subseteq D$ is a homogeneous cluster of $S$ if $\eta(C) \geq \gamma$.
\end{definition}

\begin{proposition}
\label{prop:homoRsq}
Given $C$ as a homogeneous cluster and a set of disjoint subsets $\mathcal{C}=\{C_1,\dots,C_m\}$ where $C_i \subset C$ and $\bigcup_{C_i \in \mathcal{C}} C_i = C$. For any $C_i \in \mathcal{C}$, suppose $S_{C_i},S_{C-C_i}$ are sets of data points in $C_i$ and $C-C_i$ respectively. If $S_{C_i} \approxeq_\gamma S'_{C-C_i}$ for any $C_i \in \mathcal{C}$, then $\eta(C)\geq \gamma$, which implies $C$ is a $\gamma$-homogeneous cluster.  
\end{proposition}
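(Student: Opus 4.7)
The plan is to unpack $\eta(C)$ term by term and apply the dependency hypothesis $S_{C_i} \approxeq_\gamma S_{C-C_i}$ to each fold of the cross-validation sum in Eq.~\ref{eq:Rsq}. Since the subsets $\{C_1,\dots,C_m\}$ are exactly the folds that index the sum defining $\eta(C)$, and since every summand can be bounded individually by the assumption, the result will follow from the elementary fact that an average of numbers bounded below by $t$ is itself bounded below by $t$.

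First I would fix an arbitrary $C_i \in \mathcal{C}$ and invoke Def.~\ref{def:eqvrelation} with $S_1 \coloneqq S_{C-C_i}$ playing the role of the training set and $S_2 \coloneqq S_{C_i}$ playing the role of the held-out set. The OLS estimator produced from $S_{C-C_i}$ is exactly the function $h_{C-C_i}$ appearing in Eq.~\ref{eq:Rsq}, so condition~(1) of the dependency relation yields
\begin{equation*}
\operatorname{corr}\bigl(y_{C_i},\,h_{C-C_i}(x_{C_i})\bigr) \;\geq\; \gamma.
\end{equation*}
Because $\gamma \in [0,1]$ is non-negative, this inequality survives squaring, giving a lower bound on each individual summand in Eq.~\ref{eq:Rsq}.

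Next I would sum these bounds over $i = 1,\dots,m$ and divide by $m = |\{C' : C' \subset C\}|$, concluding
\begin{equation*}
\eta(C) \;=\; \frac{1}{m}\sum_{i=1}^m \bigl(\operatorname{corr}(y_{C_i},h_{C-C_i}(x_{C_i}))\bigr)^2 \;\geq\; \gamma.
\end{equation*}
Since $\eta(C) \geq \gamma$ is precisely the criterion in the definition of $\gamma$-homogeneous cluster, the conclusion follows.

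The main obstacle is the scale mismatch between the correlation threshold used in Def.~\ref{def:eqvrelation} (where $\operatorname{corr} \geq \gamma$) and the squared-correlation quantity that defines $\eta(C)$. Naively squaring only gives $\eta(C) \geq \gamma^2$, not $\gamma$. In a careful writeup I would resolve this either (i) by reading the $\gamma$ in the dependency relation as bounding $\operatorname{corr}^2$ rather than $\operatorname{corr}$ (so that Def.~\ref{def:eqvrelation}, condition~(1) already supplies $(\operatorname{corr})^2 \geq \gamma$ on each fold), or (ii) by assuming $\gamma \in [0,1]$ and noting the inequality $\gamma^2 \geq \gamma$ does not hold in general, hence restating the conclusion as $\eta(C) \geq \gamma^2$. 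Either convention makes the averaging argument clean; the rest is bookkeeping across the $m$ folds.
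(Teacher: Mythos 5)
Your argument is essentially the paper's own proof: invoke Def.~\ref{def:eqvrelation} on each fold $(C_i,\,C-C_i)$ to obtain $corr(y_{C_i},h_{C-C_i}(x_{C_i}))\geq\gamma$, then average over the $m$ folds indexing the sum in Eq.~\ref{eq:Rsq}. The scale mismatch you flag is genuine and is present in the paper's proof as well --- the paper averages the correlations while $\eta(C)$ averages their \emph{squares}, so the argument as written only yields $\eta(C)\geq\gamma^2$ (for $\gamma\in[0,1]$ this is weaker than $\gamma$); your two proposed repairs, reading the threshold in Def.~\ref{def:eqvrelation} as bounding the squared correlation or weakening the conclusion to $\eta(C)\geq\gamma^2$, are both legitimate fixes that the paper itself does not supply.
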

\begin{proof}
Since $C$ is a homogeneous cluster, all data points $x\sim X$ in $C$ are generated from the same distribution $X\sim P_X$ and the relation $Y=h(X)$ is the same for $Y$ and $X$ random variables in $C$. For any $C_i \in \mathcal{C}$, because $S_{C_i} \approxeq_\gamma S'_{C-C_i}$, it implies that $corr(  y_{C_i}, g_{C-C_i} (x_{C_i} )  ) \geq \gamma$, where $g_{C-C_i}$ is a function trained by data points in $C-C_i$, and $ y_{C_i}, x_{C_i}$ are data points in $C_i$. Therefore, by averaging all $corr(  y_{C_i}, g_{C-C_i} (x_{C_i} )  )$ from all $C_i$, $\eta(C)\geq \gamma$.
\end{proof}

The Proposition~\ref{prop:homoRsq} suggests that if we know that all pairs of subsets $C_i$ and $C-C_i$ of $C$ have a $\gamma$-correlation-linear-learnable dependency relation, we can imply that $C$ is a $\gamma$-homogeneous cluster. The dependency relation in Def.~\ref{def:eqvrelation} can be checked by running the real data.

\subsection{Inferring maximal homogeneous partition}
\label{sec:inferMaxPar}
 In this section, we propose the algorithm to solve Problem~\ref{prob2}. To determine whether linear models provide informative solution, we compare a linear model of each cluster $C$ with the null model, which is the model that we predict $y$ using the average of $y$, $\bar{y}=\frac{\sum_{i\in C} y_i}{|C|}$. We use Eq.~\ref{eq:InfoReRatio} (Model Information Reduction Ratio~$\mathcal{I}(\mathcal{C}',H_1,H_2)$) to compare linear and null models. We set $\mathcal{C}'=\{C\}$, $H_1=\{h(x)=\bar{y}\}$, $H_2=\{h(x)=\vec{x}_i^\text{T}\vec{\beta}_C\}$ where $\vec{\beta}_C$ is estimated using data points in $C$. Hence, $\mathcal{L}(H_1)_{C} = \mathcal{L}_{\mathbb{R}}(\bar{y})$ and $\mathcal{L}(H_2)_{C} = \mathcal{L}_{\mathbb{R}}(\vec{\beta}_C)$.

 Next, we show that our proposed algorithm (Algorithm~\ref{algo:FindMaxPartiAlgo}) provides the solution for Problem~\ref{prob2}.

\setlength{\intextsep}{0pt}
\IncMargin{1em}
\begin{algorithm2e}
\caption{Finding maximal homogeneous partition in linear function class}
\label{algo:FindMaxPartiAlgo}
\SetKwInOut{Input}{input}\SetKwInOut{Output}{output}
\Input{ A  multi-resolution cluster set $\mathcal{C}=\{C_{j,k}\}$, a set  $\mathcal{S}_\mathcal{C} = \{(x_1,y_1), \dots, (x_n,y_n)\}$, and a homogeneity threshold $\gamma$.  }
\Output{A set of models $\mathcal{H}$, a maximal homogeneous partition $\mathcal{C}^*=\{C\}$, a set of optimal-MDL functions $H^*=\{h^*\}$ of $\mathcal{C}^*$, and the minimum $\eta$ of clusters  $\gamma'$.}
\SetAlgoLined
\nl\For{$C_{j,k} \in \mathcal{C}$ }{
\nl     Calculate $\hat{\beta}_{j,k}$ using Eq.~\ref{func:LinleastsquareOpt} on data points in $C_{j,k}$ \;
 \nl    Append $h(x)_{j,k} = \vec{x}^\text{T}\vec{\beta}_{j,k}$ to $\mathcal{H}$\;
    }

\nl\For{$k \gets 1$ to $n_c$ }{

\nl    \For{$C_{j,k}\in \mathcal{C}$}{
           \textcolor{blue}{\tcp*[h]{$\mathcal{I}(\{C_{j,k} \},H_0,H_{\text{lin}})$ is cached to be used later} }\\
\nl        Calculate $\mathcal{I}(\{C_{j,k} \},H_0,H_{\text{lin}})$ where $H_0=\{h(x)=\bar{y}\}$, $H_{\text{lin}}=\{h(x)\in\mathcal{H}\}$\;
        
\nl        \uIf{there is no $C \in \mathcal{C}^*$ s.t. $C_{j,k}\subset C$}
        {
        \textcolor{blue}{\tcp*[h]{$C_{j,k}$ is not at the last layer} } \\ 
\nl        \uIf{$\exists C \in \mathcal{C}, C \subset C_{j,k}$}
            {
    \nl        Let $\mathcal{C}'=\{C\}$ where $C \subset C_{j,k}$\;
     \nl       Calculate $\mathcal{I}(\mathcal{C}',\{C_{j,k} \},H_{\text{lin}})$\;
    \nl        Append $C_{j,k}$ to $\mathcal{C}^*$ if $\mathcal{I}(\mathcal{C}',\{C_{j,k} \},H_{\text{lin}})>0$ and $\eta(C_{j,k})_{\text{cv}}\geq \gamma$\;
         }
         \textcolor{blue}{\tcp*[h]{$C_{j,k}$ is at the last layer} }\\
     \nl       \Else
            {
    \nl            Append $C_{j,k}$ to $\mathcal{C}^*$\;
            }
        }
    
        }
    }
\nl \For{$C\in \mathcal{C}^*$}{
\nl \uIf{$\mathcal{I}(\{C \},H_0,H_{\text{lin}})>0$}
    {
\nl    Append $h(x)_C \in \mathcal{H}$ to  $H^*$ as an optimal MDL-function of $C$ where $h(x)$ is $C$'s linear function\;
    }
\nl    \Else
    {
\nl    Append $h(x)_C=\bar{y}_C$ to $H^*$ as an optimal MDL-function of $C$ where $\bar{y}_C$ is a mean of $y_i$ in $C$\;
    }
}
\nl Set $\gamma' = \min_{C \in \mathcal{C}^*} \eta(C) $\;
\nl Return $H,\mathcal{C}^*,H^*,\gamma'$\;
\end{algorithm2e}\DecMargin{1em}

\begin{proposition}
Given a  multi-resolution cluster set $\mathcal{C}=\{C_{j,k}\}$, MRB set  $\mathcal{S}_\mathcal{C}$, and the threshold $\gamma=0$.
Algorithm~\ref{algo:FindMaxPartiAlgo} always returns the maximal homogeneous partition.
\end{proposition}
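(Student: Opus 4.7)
The plan is to argue that the algorithm's top-down sweep, together with the MDL-comparison test $\mathcal{I}(\mathcal{C}',\{C_{j,k}\},H_{\text{lin}})>0$, correctly decides membership in the true maximal homogeneous partition layer by layer. I would proceed by induction on the layer index $k$, maintaining the invariant that after layer $k$ has been processed, the set $\mathcal{C}^*$ constructed so far contains exactly the maximal homogeneous clusters that reside at layers $1,\ldots,k$. Note that the homogeneity check reduces to $\eta(C_{j,k})\ge 0$, which is automatic since $\eta$ is an average of squared correlations; so the work is entirely in the information-reduction-ratio test.

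At each visited $C_{j,k}$, I would split into three mutually exclusive cases based on its position relative to the true maximal homogeneous partition $\mathcal{M}$: (i) $C_{j,k}\in\mathcal{M}$; (ii) $C_{j,k}\subsetneq C$ for some $C\in\mathcal{M}$; (iii) $C_{j,k}$ strictly contains a union of maximal homogeneous clusters and is itself not homogeneous. Case (ii) is handled by the outer guard: the containing $C$ lies at a strictly shallower layer, was therefore added to $\mathcal{C}^*$ earlier by the induction hypothesis, and hence $C_{j,k}$ is skipped. Case (i) is handled by the first branch of the test: since all direct children of $C_{j,k}$ are subsets of a homogeneous cluster, they share the same underlying linear function $h^*$, so the OLS fit on $C_{j,k}$ and the OLS fits on each child yield essentially the same residuals, while encoding one parameter vector is cheaper than encoding $|\mathcal{C}'|$ of them by roughly $(|\mathcal{C}'|-1)\,c_H$ bits. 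Hence $\mathcal{I}(\mathcal{C}',\{C_{j,k}\},H_{\text{lin}})>0$ and $C_{j,k}$ is correctly appended. Case (iii) is handled by the same test with the opposite sign: group the direct children according to which maximal homogeneous cluster of $\mathcal{M}$ each refines, and apply Proposition~\ref{prop:homoAndHeteroCls} with $C'$ playing the role of $C_{j,k}$ and the grouped refinement playing the role of $\{C_1,\ldots,C_k\}$; this gives
\[
\sum_{i\in C_{j,k}}\mathcal{L}_{\mathbb{R}}(y_i-g'(x_i)) \;\ge\; \sum_{C''\in\mathcal{C}'}\sum_{i\in C''}\mathcal{L}_{\mathbb{R}}(y_i-g_{C''}(x_i)),
\]
which under the standard regime $n\gg d$ dominates the additional parameter-encoding cost $(|\mathcal{C}'|-1)c_H$, so $\mathcal{I}(\mathcal{C}',\{C_{j,k}\},H_{\text{lin}})\le 0$ and $C_{j,k}$ is correctly rejected. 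The final layer is handled separately: any leaf either falls into case (ii) and is skipped by the outer guard, or is itself a maximal homogeneous cluster, in which case the unconditional \emph{else}-branch appends it. The post-processing loop over $C\in\mathcal{C}^*$ only selects between the linear and null optimal-MDL function per cluster and does not affect the partition itself, so $\mathcal{C}^*$ equals $\mathcal{M}$ upon termination.

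The main obstacle will be turning the approximate equalities that underlie Theorem~\ref{theo:MaxHomMDL} into exact strict-sign statements for $\mathcal{I}$. Concretely, in case (i) I must argue that the residual encoding costs across $C_{j,k}$ and its children agree up to a term that is small compared to the $c_H$ overhead saved, and in case (iii) I must argue that the residual reduction from Proposition~\ref{prop:homoAndHeteroCls} is strict and scales with cluster size, so that it always outweighs the finite parameter overhead. Under Assumption~\ref{assump:bignumberMoreCode}, the formula \eqref{func:RealRepFunc} for $\mathcal{L}_{\mathbb{R}}$, and the asymptotic regime in which each $|C_{j,k}|$ dominates $O(d\log n)$, these strict inequalities follow, completing the induction and hence the proposition.
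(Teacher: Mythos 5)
Your proposal is correct at the same level of rigor as the paper and rests on the same two pillars (the top-down sweep plus the sign of the information-reduction ratio, justified via Case~1 of Theorem~\ref{theo:MaxHomMDL} for acceptance and Proposition~\ref{prop:homoAndHeteroCls} for rejection), but it is organized quite differently. The paper argues in two blocks: first soundness, by checking that the output is an MRC partition and that every homogeneous cluster passes the test at the highest layer it occupies; then completeness, by a separate contradiction argument showing the maximal homogeneous partition is unique. You instead run a single induction on the layer index with the exactness invariant that $\mathcal{C}^*$ agrees with the true partition on all layers processed so far, and split each visited cluster into three cases. This buys you two things the paper leaves implicit: (a) an explicit argument that a \emph{non}-homogeneous cluster is rejected (your case~(iii)) --- the paper only ever shows that homogeneous clusters are accepted, and without the rejection direction the algorithm could in principle stop too high in the hierarchy; and (b) the observation that $\gamma=0$ makes the $\eta(C_{j,k})\ge\gamma$ guard vacuous, which is why the proposition needs that hypothesis at all. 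Conversely, your invariant subsumes the paper's uniqueness argument rather than proving it separately, which is fine. Two caveats worth flagging: in case~(iii) the algorithm compares $C_{j,k}$ against its \emph{direct children}, which need not themselves be homogeneous, so Proposition~\ref{prop:homoAndHeteroCls} does not apply verbatim to that pair --- you need the intermediate step that a finer OLS partition never increases the residual encoding, or you must compare against the maximal homogeneous refinement and chain the inequalities; and the strict-sign issues you identify at the end (residuals of children versus parent in case~(i), residual inflation versus parameter overhead in case~(iii)) are exactly the approximate equalities the paper also assumes away in Theorem~\ref{theo:MaxHomMDL}, so your proof inherits the same informality rather than introducing new gaps.
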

\begin{proof}
For soundness, given $\mathcal{S}_\mathcal{C}$ and $\mathcal{C}=\{C_{j,k}\}$, we prove that Algorithm~\ref{algo:FindMaxPartiAlgo} always provides a maximal homogeneous partition as an output.
According to Theorem~\ref{theo:MaxHomMDL}, the maximal homogeneous partition $\mathcal{C}^*$ always has the $\mathcal{L}(\mathcal{C}^*,H)$ lower  or equal any MRC partition. First, we show that Algorithm~\ref{algo:FindMaxPartiAlgo} provides MRC partition.

For line 4 to 5, the algorithm seeks the clusters from a top layer to a bottom one. It implies that if there exists  a cluster $C\in \mathcal{C}^*$ in the above layer, then it is included to $\mathcal{C}^*$ before its subsets. The condition in line 7 prevents the algorithm to add any subsets of the cluster members of $\mathcal{C}^*$, hence, all clusters in $\mathcal{C}^*$ are disjoint. For some cluster $C$ that is not the last layer member of $\mathcal{C}$, it is either included to $\mathcal{C}^*$ by the line 8-11 or its sub clusters are included in $\mathcal{C}^*$ for some later iteration of the loop.  If there is no sub clusters of $C$ are included until the last layer, then all sub clusters of $C$ at the last layer are included into  $\mathcal{C}^*$ by default. Hence, $\mathcal{C}^*$ covers all individuals because the union of the first layer clusters must cover all individuals and later layer clusters are subsets of some first layer cluster. Hence, the $\mathcal{C}^*$ is MRC partition. 

Second, we show that $\mathcal{C}^*$ is the maximal homogeneous partition. In line 11, we include the cluster $C$ into $\mathcal{C}^*$ only if $\mathcal{I}(\{C_1,\dots,C_k\},\{C \},H_{\text{lin}})>0$.   For the homogeneous cluster $C$ and its subsets $C_1,\dots,C_k$, by Case 1 in  Theorem~\ref{theo:MaxHomMDL}, $\mathcal{I}(\{C_1,\dots,C_k\},\{C \},H_{\text{lin}})>0$.  Since our algorithm performs top-down searching, it  always gets a homogeneous cluster $C$ in the highest possible layers before its subsets by the line 11. Hence, Algorithm~\ref{algo:FindMaxPartiAlgo} provides the maximal homogeneous partition.\\

For completeness, given $\mathcal{C}^*$ as any maximal homogeneous partition, we prove that there is only one possible unique $\mathcal{C}^*$ that Algorithm~\ref{algo:FindMaxPartiAlgo} provides and no other $\mathcal{C}'$ exists. Suppose $\mathcal{C}^*_1,\mathcal{C}^*_2$ are maximal homogeneous partitions of $\mathcal{S}_\mathcal{C}$ and $\mathcal{C}=\{C_{j,k}\}$. Let  $A=\mathcal{C}^*_1 \cap \mathcal{C}^*_2$, there are some clusters in $\mathcal{C}^*_1-A$ that are different from clusters in $\mathcal{C}^*_2-A$. Let assume that $C \in\mathcal{C}^*_1-A$ and $\{C_1,\dots,C_k\} \subseteq \mathcal{C}^*_2-A$ where $\bigcup C_i =C$. According to Case 1 in Theorem~\ref{theo:MaxHomMDL}, because $C$ and its subsets are homogeneous, the length of encoding by $C$ must smaller than using $\{C_1,\dots,C_k\}$. Hence, $\mathcal{C}^*_2$ is not maximal homogeneous partition, which is a contradiction!

Therefore, Algorithm~\ref{algo:FindMaxPartiAlgo} always returns the maximal homogeneous partition which is unique. 
\end{proof}

The $\eta(C_{j,k})_{\text{cv}}\geq \gamma$ condition in line guarantees that all homogeneous clusters are $\gamma$-homogeneous clusters. By setting $\gamma$, Algorithm~\ref{algo:FindMaxPartiAlgo} provides the  maximal homogeneous partition s.t. all homogeneous clusters are $\gamma$-homogeneous clusters. Otherwise, it provides the set of MRC partition that contains the $\gamma$-homogeneous clusters from the highest possible layer that can be found, and the clusters from the last layers for the population that has no $\gamma$-homogeneous cluster in any layer. 

Let $\mathcal{C}'$ be the homogeneous partition with some none-$\gamma$-homogeneous clusters generated by the algorithm at the first time. We can have 

\begin{equation}
    \gamma'=\min_{C' \in \mathcal{C}'} \eta(C').
\end{equation}

After we set the threshold $\gamma = \gamma'$ and run Algorithm~\ref{algo:FindMaxPartiAlgo} for the second time, then the result of the algorithm is the maximal homogeneous partition with all $\gamma'$-homogeneous clusters. This is true since we know that all clusters in $\mathcal{C}'$ are $\gamma'$-homogeneous clusters. By running the algorithm again at the second time, if the result $\mathcal{C}'_2$ is not the same as the first running result $\mathcal{C}'$, by the restriction as line 11, the different parts must be $\gamma'$-homogeneous clusters. However, the experts in the field should determine whether $\gamma'$ is appropriate for their problems.

\subsection{Time complexity}
\label{sec:timeComplex}
The least square approach has the time complexity as $\mathcal{O}(n^2d)$ where $n$ is a number of individuals and $d$ is a number of $X$ dimensions. Given $n_{max}$ is a number of individuals in the largest cluster and $|\mathcal{C}|$ is a total number of clusters from all layers. Algorithm~\ref{algo:FindMaxPartiAlgo} has a time complexity as $\mathcal{O}(|\mathcal{C}|n_{max}^2d)$ where $n_{max}\leq n$. The lower bound is $\Omega(n^2d)$.
\section{Experimental setup}
We use both simulation and real-world datasets to evaluate our method performance. 
\subsection{Simulation data}
\label{sec:simdata}
\begin{figure}
    \centering
    \includegraphics[width=1\columnwidth]{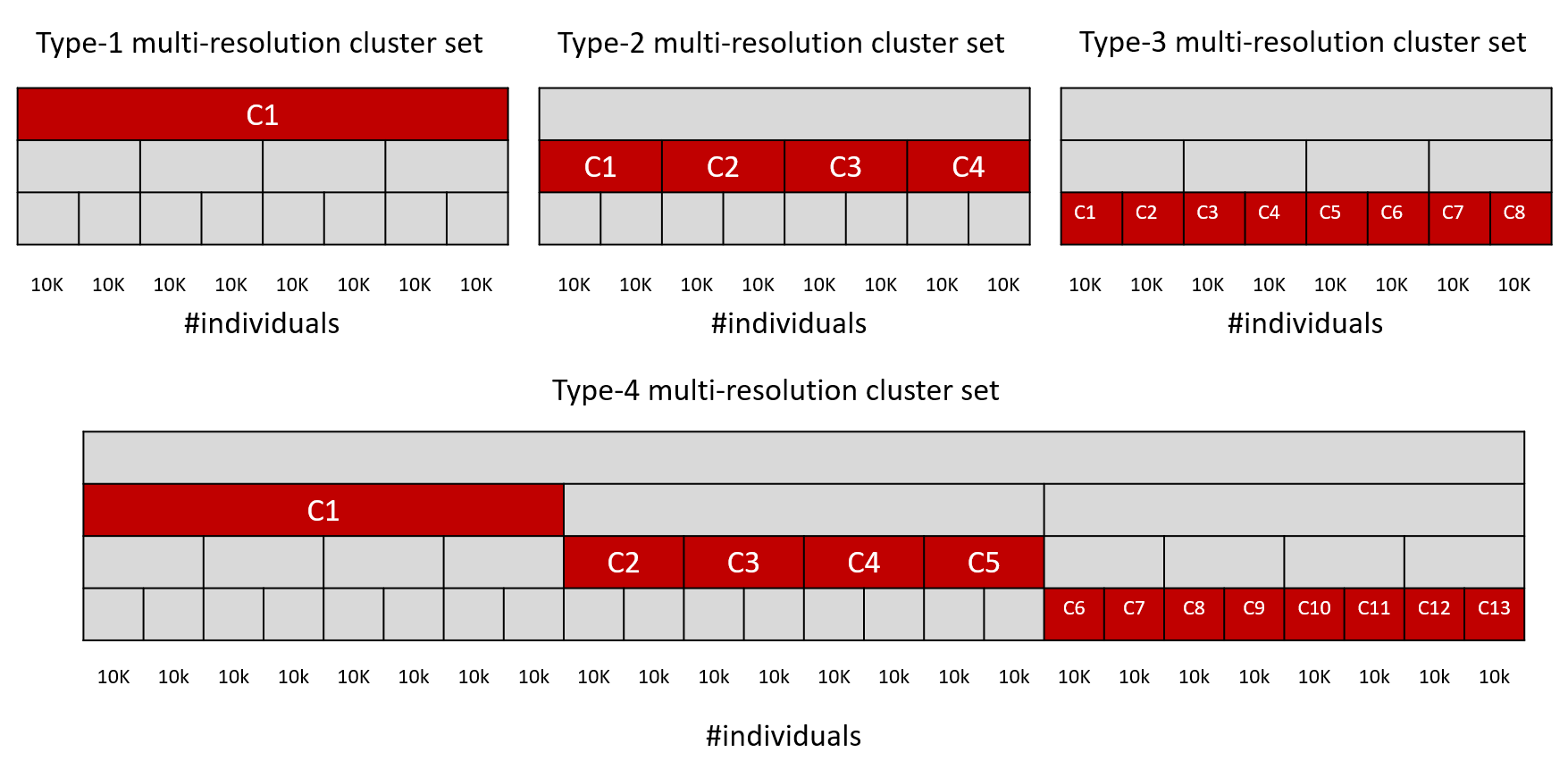}
    \caption{The types of simulation datasets. The $k$th row represents $k$th layer of multi-resolution cluster set. Each $j$th cell in $k$th row represents $j$th cluster of $k$th layer. The red color cells are clusters within  a maximal homogeneous partition $\mathcal{C}^*$. }
    \label{fig:simtypes}
\end{figure}

Given a multi-resolution cluster set $\mathcal{C}=\{C_{j,k}\}$, we assign some subsets of $\mathcal{C}$ to be a maximal homogeneous partition $\mathcal{C}^*=\{C\}$. We generate realizations of independent random variables $X_1,\dots ,X_K$ from normal distribution where $X_k \sim \mathcal{N}(0,1)$. For all members in each cluster $C_j \in \mathcal{C}^*$, their random variables $Y_j,X_j$ have the same joint distribution. We have a following relation between $X_j$ and $Y_j$.

\begin{equation}
    Y_j=c_1\cdot X_j + c_2  
\end{equation}

Where $c_1,c_2 \in \mathbb{R}$, and $\forall k\neq j, Y_j \indep X_k$. This guarantees that each $C_j \in \mathcal{C}^*$ has a different function $Y=h(X)$.

In our simulation setting, we have $X_1,\dots X_{20}$. Each individual $i$ has a pair of values $(y_i,\vec{x}_i)$ where $\vec{x}_i=(x_{1,i},\dots,x_{20,i})$. We have four types of simulation datasets for linear models. Each type has a different  $\mathcal{C}=\{C_{j,k}\}$ and $\mathcal{C}^*=\{C\}$ (see Fig.~\ref{fig:simtypes}). Each cluster in the last layer of any dataset have 10,000 individuals as members. Hence, a type-1/2/3 dataset has $\vec{y}=(y_1,\dots,y_{80000})$, and $80000\times 20$ matrix $\textbf{X}$, while a type-4 dataset has $\vec{y}=(y_1,\dots,y_{240000})$, and $240000\times 20$ matrix $\textbf{X}$.

Moreover, we also generated two types of datasets that has $Y=h(X)$ as a nonlinear function.  The first one is the dataset type of exponential function.

\begin{equation}
    Y_j=c_1\cdot e^{X_j} + c_2  
\end{equation}

The second type is the polynomial function.

\begin{equation}
    Y_j=c_1\cdot {X_j}^d + c_2  
\end{equation}

Where $c_1,c_2 \in \mathbb{R}$, and $\forall k\neq j, Y_j \indep X_k$. We set $d=3$ as the polynomial degree. We generated these nonlinear datasets base on  a multi-resolution cluster set of type-4 dataset in Fig.~\ref{fig:simtypes}. The parameter settings of both types are the same as the type-4 datasets except there are 100 individuals per clusters for the last layer. 

We use these simulation datasets to evaluate whether our framework can infer the correct $\mathcal{C}^*$.  We generated 100 datasets for each type and used them to report the averages of framework performance for each dataset type. We define true positive cases (TP) of prediction as a number of individuals of clusters within the ground-truth maximal homogeneous partition s.t. these clusters are also within the predicted maximal homogeneous partition. The false negative cases (FN) of prediction is a number of individuals in clusters within the ground-truth maximal homogeneous partition that are not in the predicted maximal homogeneous partition. The false positive cases (FP) of prediction is a number of individuals that belong to clusters outside the maximal homogeneous partition but the predicted results claim that these clusters are in the maximal homogeneous partition.  We use TP, FN, and FP to compute precision, recall, and F1-score  values in the result section.

\begin{figure}
    \centering
    \includegraphics[width=1\columnwidth]{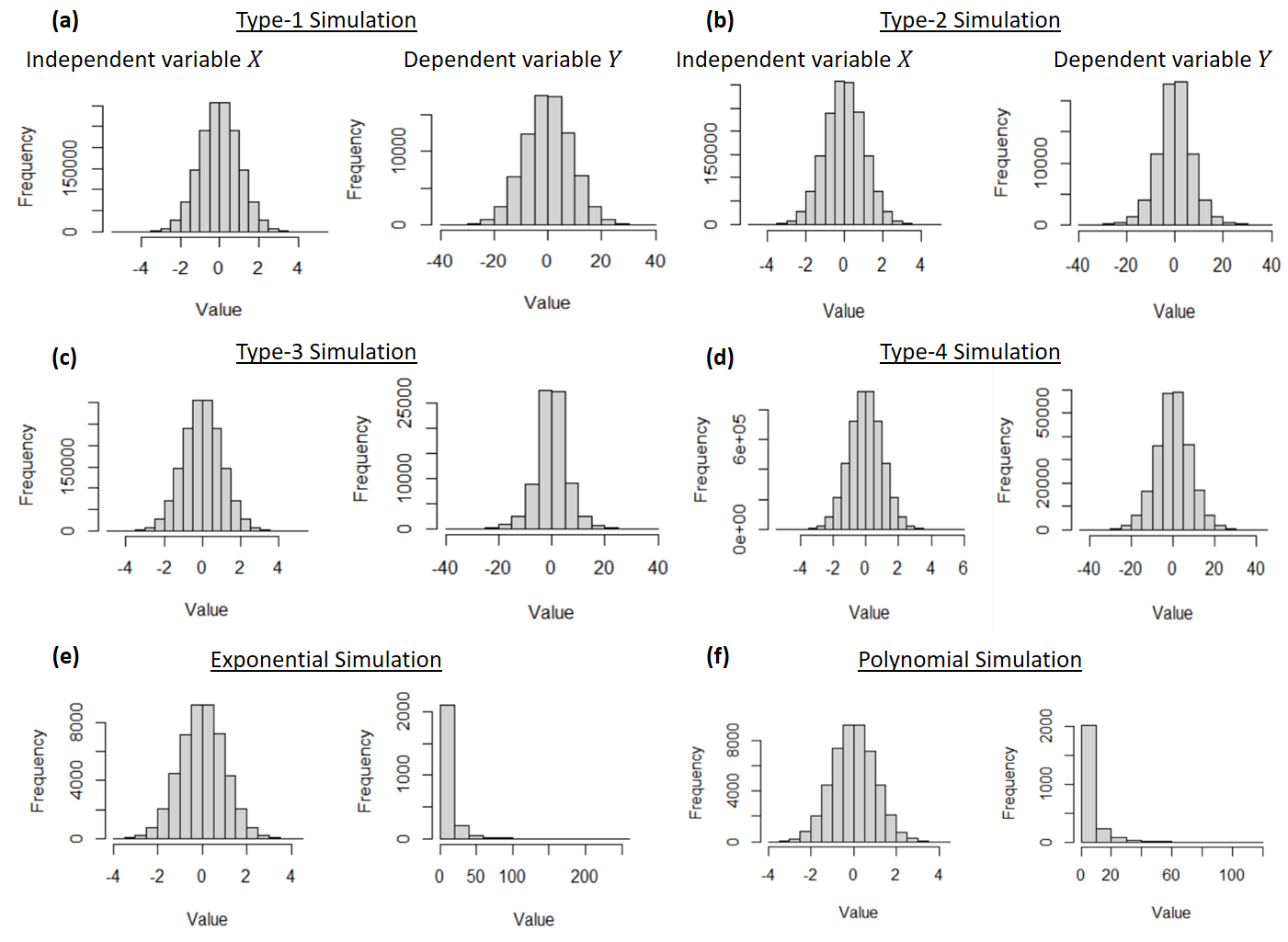}
    \caption{Distributions of simulation data. (a-d) Distributions of independent and dependent variables of simulation type-1 to type-4. (e) Distributions of variables of exponential type. (f) Distributions of variables of polynomial type. }
    \label{fig:SimDataDist}
\end{figure}

Lastly, the distributions of all types of simulation data are shown in Figure~\ref{fig:SimDataDist}.  

\subsubsection{\textbf{Motivation of the simulation design} }
The Multidimensional Poverty Index (MPI)~\cite{alkire2010multidimensional,alkire2019global} is used by United Nations Development Programme (UNDP) to measure acute poverty around the world. By design, MPI considers several factors that can contribute to poverty of people.  In Thailand, there are five main dimensions that government policy makers consider in MPI: health, living condition, education, financial status, and access to the public services (see Table~\ref{tab:ThaiMPI} for more details). Note that MPI typically analyzes in the household level.

Given $Q$ as a matrix derived from surveys where $Q(i,j)$ is a binary value of whether household $i$ fails the dimension $j$ in MPI: one for a fail status and zero for a pass status. In other words, a row $i$ of $Q$ represents a vector of poverty statuses of household $i$.  For example, suppose $j$ is the dimension of household average income, if household $i$ has the average income below the government threshold, then $Q(i,j)=1$.  In the analysis, suppose $i$ fails $d_i$ dimensions. $i$ is considered to be poor if $\bar{d}_i=d_i/n_0$ is below a government threshold where $n_0$ is a number of dimensions. Now, we are ready to define the MPI index.

\begin{equation}
    M_0=q_0\times a_0.
\end{equation}
Where $q_0\in [0,1]$ is a ratio of a number of poor people divided by a number of total population, $a_0$ is the average of $\bar{d}_i$ among poor people, and $M_0\in [0,1]$ is the MPI index. The higher $M_0$ implies higher issues of poverty.

\begin{table}[]
\centering
\caption{The official dimensions of MPI that policy makers of Thailand currently use to design policies that are related to poverty issues.}
\label{tab:ThaiMPI}
\begin{tabular}{|l|l|}
\hline
\rowcolor[HTML]{C0C0C0} 
Main dimensions                                             & Subdimensions                                                                       \\ \hline
                                                            & Birth weight records                                                                \\ \cline{2-2} 
                                                            & \cellcolor[HTML]{EFEFEF}Hygiene \& healthy diet                                     \\ \cline{2-2} 
                                                            & Accessing to necessary medicines                                                    \\ \cline{2-2} 
\multirow{-4}{*}{Health}                                    & \cellcolor[HTML]{EFEFEF}Working out habits                                          \\ \hline
\cellcolor[HTML]{EFEFEF}                                    & Living in a reliable house                                                          \\ \cline{2-2} 
\rowcolor[HTML]{EFEFEF} 
\cellcolor[HTML]{EFEFEF}                                    & Accessing to clean water                                                            \\ \cline{2-2} 
\cellcolor[HTML]{EFEFEF}                                    & Getting enough water for consumption                                                \\ \cline{2-2} 
\rowcolor[HTML]{EFEFEF} 
\multirow{-4}{*}{\cellcolor[HTML]{EFEFEF}Living conditions} & Living in a tidy house                                                              \\ \hline
                                                            & Children as a pre-school age are   prepared for a school                            \\ \cline{2-2} 
                                                            & \cellcolor[HTML]{EFEFEF}Children as a school age  can attend to mandatory education \\ \cline{2-2} 
                                                            & Everyone in household can attend at least   high-school education                   \\ \cline{2-2} 
\multirow{-4}{*}{Education}                                 & \cellcolor[HTML]{EFEFEF}Everyone in household can read                              \\ \hline
\cellcolor[HTML]{EFEFEF}                                    & Adults (age 15-59) have reliable jobs                                               \\ \cline{2-2} 
\rowcolor[HTML]{EFEFEF} 
\cellcolor[HTML]{EFEFEF}                                    & Seniors (age 60+) have incomes                                                      \\ \cline{2-2} 
\multirow{-3}{*}{\cellcolor[HTML]{EFEFEF}Financial status}            & Average income of household members                                                 \\ \hline
                                                            & \cellcolor[HTML]{EFEFEF}Seniors can access public services in need                  \\ \cline{2-2} 
\multirow{-2}{*}{Access to public services}                 & People with disabilities can   access public services in need                       \\ \hline
\end{tabular}
\end{table}

Based on the MPI concept, however, MPI cannot be used to find resolutions of common problems and dependency among dimensions (e.g. income and health issues). Hence, our aim of study is to  identify whether other dimensions (e.g. health, education) affect incomes of households. We are interested in identifying the resolutions of common issues from specific dimensions that affect people's incomes. Given that there are different linear dependencies between synthetic poverty dimensions ($X$ variable) and income ($Y$ variable) in a simulation data in various resolutions, we want to firstly test whether our proposed framework can  identify the correct resolution of models that share common dependencies between income and independent dimensions. Hence, the main purpose of our simulation is to be used as a sanity check whether our framework can correctly identify resolution w.r.t. our ground truth before deploying the proposed framework to the real world datasets.

\subsection{Real-world data: Thailand's population household information}
\label{sec:realdata}

We obtained the dataset of Thailand household-population surveys from Thai government. The surveys were collected in 2019. The surveys were used for estimating the Multidimensional Poverty Index (MPI)~\cite{alkire2010multidimensional,alkire2019global}, which is the main poverty index that UN is currently using. We used the surveys from two provinces, Khon Kaen and Chiang Mai, to perform our analysis. There are 353,910 households  for Khon Kaen, and  378,465 households for Chiang Mai. For each individual household, there are 30 dimensions that describe the characteristics of each household. Each dimension has three possible values: 1 means a good condition, 0 means no data, and -1 means a bad condition. These 30 dimensions can be categorized into five aspects: health, financial status, education, access of public services, and the living conditions. We remove the average household income dimension from the set of independent variable and make it as a dependent variable for prediction.  There are five layers of Thailand administrative divisions: 1) the nation (Thailand), 2) provinces (e.g.  Khon Kaen and Chiang Mai), 3) amphoes, 4)  tambons, and 5) villages. We created a Multi-resolution cluster set from these administrative division. Our task is to infer which layer of these administrative divisions are predictive for income prediction given the household information.

\begin{figure}
    \centering
    \includegraphics[width=1\columnwidth]{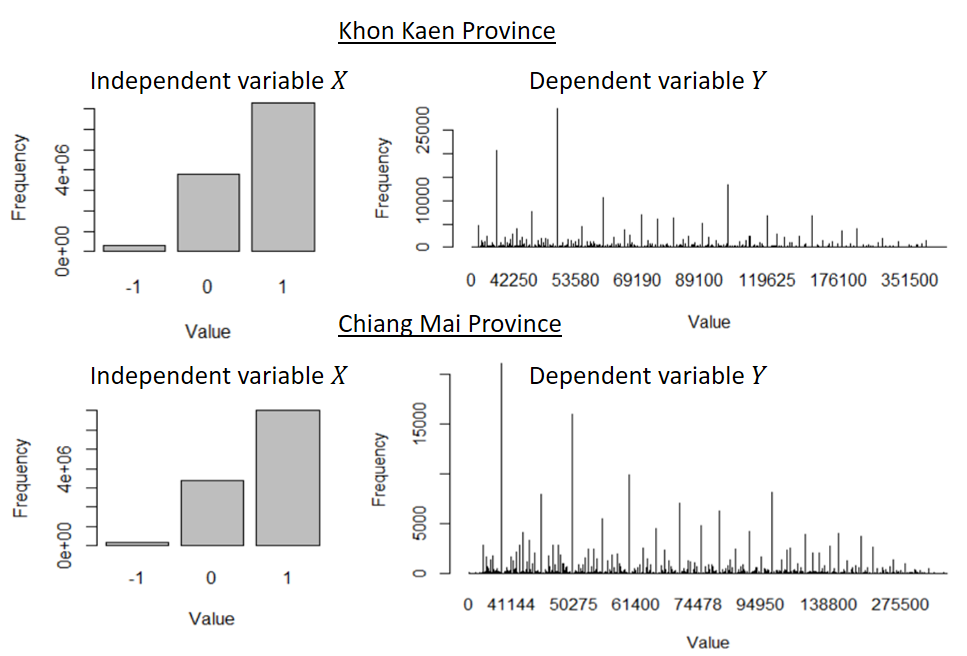}
    \caption{Distributions of real datasets. (top) Distributions of independent and dependent variables of Khon Kaen province. (bottom) Distributions of variables of Chiang Mai province. }
    \label{fig:RealDataDist}
\end{figure}

Lastly, the distributions of real datasets are shown in Figure~\ref{fig:RealDataDist}.  

\subsection{Baseline method: greedy algorithm}
\label{sec:baselineGreedy}

\setlength{\intextsep}{0pt}
\IncMargin{1em}
\begin{algorithm2e}
\caption{Finding a partition using greedy approach}
\label{algo:GreedyPartiAlgo}
\SetKwInOut{Input}{input}\SetKwInOut{Output}{output}
\Input{ A set  $\mathcal{S} = \{(x_1,y_1), \dots, (x_n,y_n)\}$, a  multi-resolution cluster set $\mathcal{C}=\{C_{j,k}\}$  }
\Output{A set of models $\mathcal{H}$, a greedy partition $\mathcal{C}^*=\{C\}$, a set of functions $H^*=\{h^*\}$ of $\mathcal{C}^*$}
\SetAlgoLined
\nl\For{$C_{j,k} \in \mathcal{C}$ }{
\nl     Calculate $\hat{\beta}_{j,k}$ using Eq.~\ref{func:LinleastsquareOpt} on data points in $C_{j,k}$ \;
\nl     Calculate root mean square error (RMSE) for $C_{j,k}$\;
 \nl    Append $h(x)_{j,k} = \vec{x}^\text{T}\vec{\beta}_{j,k}$ to $\mathcal{H}$\;
    }
\nl     Sort clusters in $\mathcal{C}$ by  RMSE and make $\mathcal{C}_S$ as an ascendant sorted list of clusters\;

\nl    \For{$C\in \mathcal{C}_S$}{
        
\nl        \uIf{there is no $C' \in \mathcal{C}^*$ s.t. $C\subset C'$}
        {
\nl         Append $C$ to $\mathcal{C}^*$\;
    
        }
    }
 
\nl \For{$C\in \mathcal{C}^*$}{
\nl \uIf{$\mathcal{I}(\{C \},H_0,H_{\text{lin}})>0$}
    {
\nl    Append $h(x)_C \in \mathcal{H}$ to  $H^*$ where $h(x)$ is $C$'s linear function\;
    }
\nl    \Else
    {
\nl    Append $h(x)_C=\bar{y}_C$ to $H^*$ where $\bar{y}_C$ is a mean of $y_i$ in $C$\;
    }
}

\nl Return $H,\mathcal{C}^*,H^*$\;
\end{algorithm2e}\DecMargin{1em}

To the best of our knowledge, since there is no method that we can compare against our approach directly, we compare our Algorithm~\ref{algo:FindMaxPartiAlgo} performance with the greedy algorithm in Algorithm~\ref{algo:GreedyPartiAlgo} as a main baseline. The greedy algorithm infer the partition by greedily selecting the clusters that have the lowest root mean square error (RMSE) into the output partition. Hence, no algorithm can provide the partition that has the total RMSE as low as  Algorithm~\ref{algo:GreedyPartiAlgo}. On the line 3 of Algorithm~\ref{algo:GreedyPartiAlgo}, the RMSE of the cluster $C_{j,k}$ is calculated by fitting all data points in $C_{j,k}$ to make a model using linear regression. Then, we use the predicted values of the dependent variable from the inferred model and the true values of the  dependent variable to compute RMSE. In the result section, we will show that our approach in Algorithm~\ref{algo:FindMaxPartiAlgo} provides  the maximal homogeneous partition that has the total RMSE as low as Algorithm~\ref{algo:GreedyPartiAlgo} partition's. 

The least square approach has the time complexity as $\mathcal{O}(n^2d)$ where $n$ is a number of individuals and $d$ is a number of $X$ dimensions. Given $n_{max}$ is a number of individuals in the largest cluster and $|\mathcal{C}|$ is a total number of clusters from all layers. Algorithm~\ref{algo:GreedyPartiAlgo} has a time complexity as $\mathcal{O}(|\mathcal{C}|n_{max}^2d)$ where $n_{max}\leq n$.  The time complexity is dominated by the code line 1-4 in Algorithm~\ref{algo:GreedyPartiAlgo} that infers the model from data using the least square approach, which also occurs in Algorithm~\ref{algo:FindMaxPartiAlgo}. This makes both Algorithm~\ref{algo:FindMaxPartiAlgo} and Algorithm~\ref{algo:GreedyPartiAlgo} have the same time complexity. The lower bound is $\Omega(n^2d)$.

\subsection{Baseline method: finite mixtures of regression models}
\label{sec:baselineMixtureModel}

We compare our approach with the finite mixtures of regression models (MR) developed by ~\cite{grun2007applications,JSSv011i08,grun2006fitting}. The mixture-model implementation is in the R package ``flexmix". We set a number of components of mixture models (parameter $k$) corresponding to a number of homogeneous clusters within each dataset. We use a mixture model to show that even if mixtures of linear regression knows the number of homogeneous clusters, in complicated scenarios, its performance is quite unstable. In contrast, our framework that deploys the simple linear regression with multi-resolution partitions can perform better than mixture models in the same complicated datasets.   

Additionally, given $n$ is a number of individuals and $k=|\mathcal{C}|$ is a number of clusters. Algorithm~\ref{algo:FindMaxPartiAlgo} has a time complexity as $\mathcal{O}(k\times n^2\times d)$ while the time complexity of  mixtures models parameter estimating using Expectation-maximization (EM) algorithm~\cite{dempster1977maximum} (deployed by flexmix) is $\mathcal{O}(k\times t_0\times n\times d )$ where $t_0 \in [1,\infty)$ is a number of time steps that EM algorithm converges w.r.t. some predefined threshold. This makes  mixtures of regression models running time is unknown compared to our approach.

\subsection{Software and hardware used in the analysis}
The  computer  specification  that we used in this experiment is the Lenovo Thinkpad T480s, with  CPU Intel Core i7-8650U 1.9 GHz, and RAM 16 GB.
The software we used to conduct the experiment is the R studio version 1.2.5033 based on the R version 3.6.2.  
The R packages we used in the analysis are igraph~\cite{igraph}, caret~\cite{caret}, and ggplot2~\cite{ggplot2}. All experiments were conducted on Microsoft Window 10. The implementation of our framework is in the form of R package with documentation that can be found at~\cite{SharedLink}.
\section{Results}

\subsection{Simulation}

In this section, we report the results of our analysis from simulation datasets (Section~\ref{sec:simdata}). We compared our approach (Algorithm~\ref{algo:FindMaxPartiAlgo}) with the baseline methods from Section~\ref{sec:baselineGreedy} and Section~~\ref{sec:baselineMixtureModel}.  

\begin{figure}
    \centering
    \includegraphics[width=1\columnwidth]{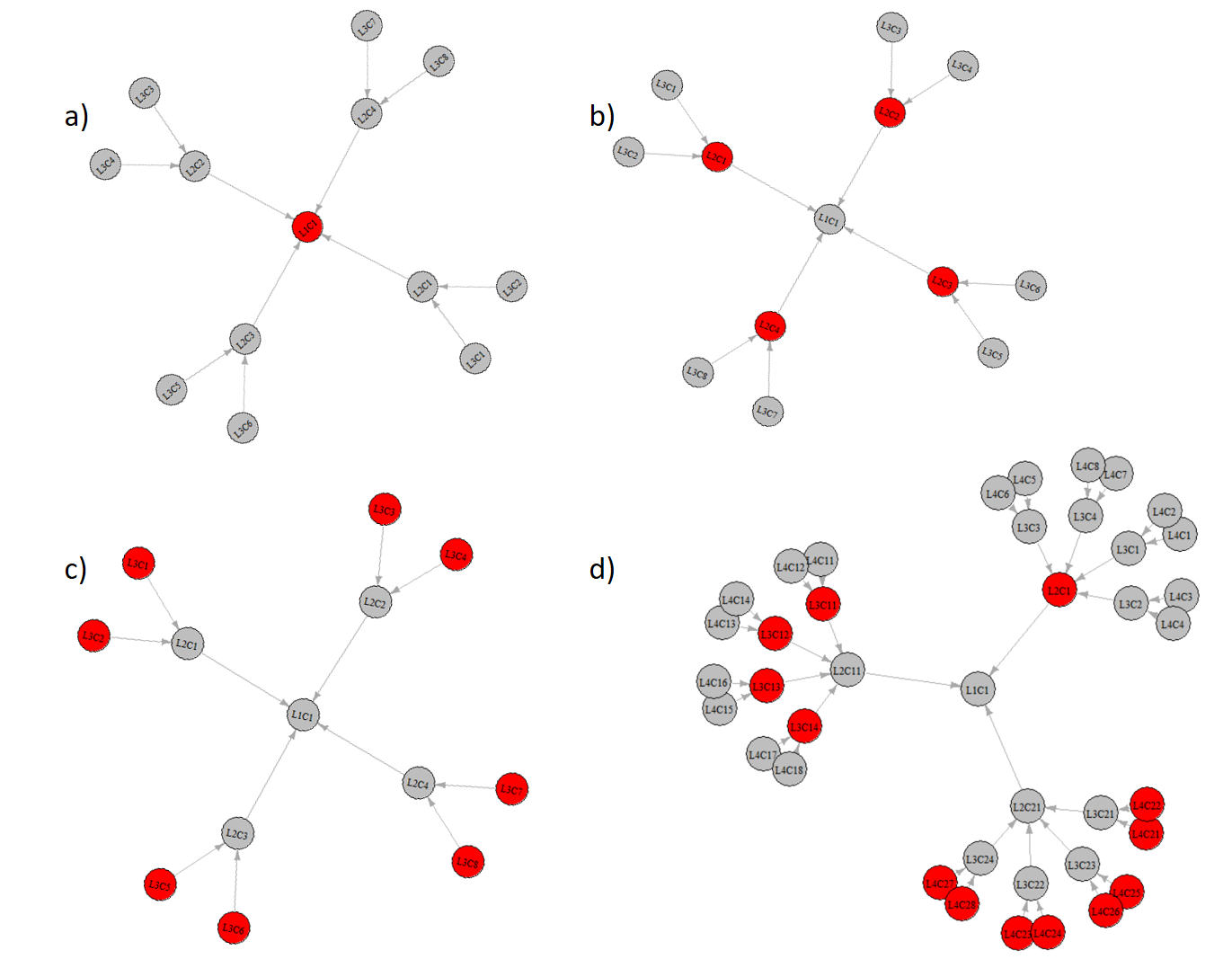}
    \caption{The inferred optimal partition from our method. The nodes represent clusters of population. The edges represent subset relation between a parent cluster (head of arrow) and a child cluster (rear of arrow). The red nodes are clusters selected as members of output partition.}
    \label{fig:resOPTPartition}
\end{figure}

\begin{figure}
    \centering
    \includegraphics[width=1\columnwidth]{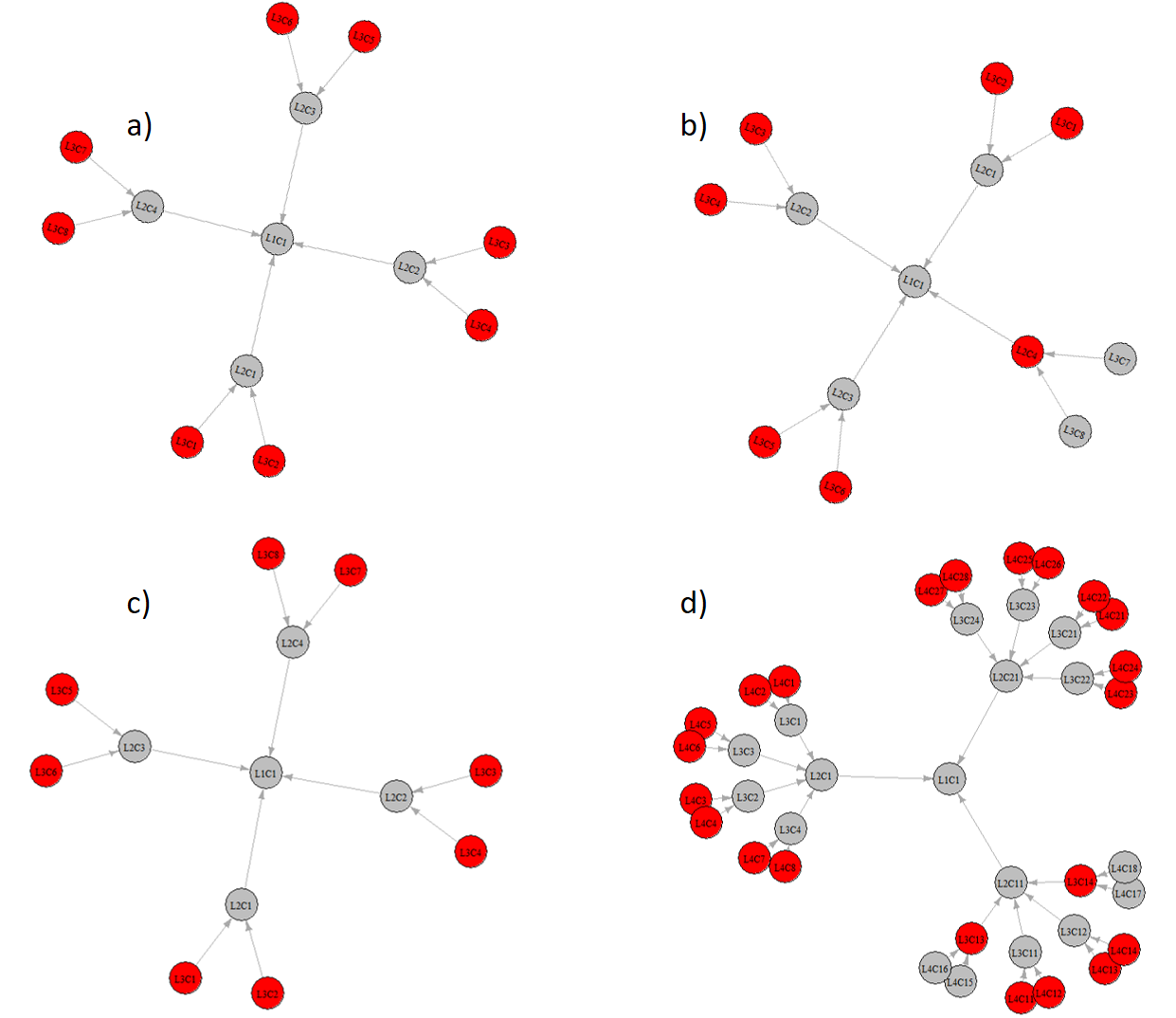}
    \caption{The inferred optimal partition from the greedy approach. The nodes represent clusters of population. The edges represent subset relation between a parent cluster (head of arrow) and a child cluster (rear of arrow). The red nodes are clusters selected as members of output partition.}
    \label{fig:resGreedyPartition}
\end{figure}

The result of output partition from our approach is in Fig.~\ref{fig:resOPTPartition}. The red nodes are clusters selected as members of maximal homogeneous partition. Comparing to the ground truth in Fig.~\ref{fig:simtypes}, our approach can infer the maximal homogeneous partition $100\%$ correctly whether the maximal homogeneous partition consists of clusters at 1st layer (Type-1 datasets), 2nd layer  (Type-2 datasets), 3rd layer  (Type-3 datasets), or multiple layers (type-4 datasets).  In contrast, the result of output partition from the greedy algorithm in Fig.~\ref{fig:resGreedyPartition} shows that it mostly selected the clusters in the last layers as the members of the partition. The greedy algorithm even included the lower clusters that are the subset of homogeneous cluster from the upper layer as the members of output partition. This is because the average RMSE of clusters from lower layer are typically lower than the upper layer clusters. However, the difference between average RMSE of lower and upper layer are not significant (see T1-T3 Datasets in Table~\ref{tb:RMSE}). In fact, the lower-layer clusters trend to have a lower RMSE compared to their super-set cluster because the OLS seems to fit noise easier with less number of individuals. This indicates that the greedy algorithm is sensitive to noise, which implies that it has an over-fitting issue.

\begin{table}[]
\caption{The average RMSE values of approaches on each dataset type. Each element in the table is the average of RMSE from 100 datasets per dataset type. Each row represents results from a specific type of datasets. Each column represents a different method.  }
\label{tb:RMSE}
\begin{tabular}{c|c|c|c|c|c|}
\cline{2-6}
                                 & OPT      & Greedy   & Linear Regression & Mixture of Regression& $y_i-\bar{y}$ \\ \hline
\multicolumn{1}{|l|}{Type-1 Dataset} & $<10^{-6}$ & $<10^{-6}$ & $<10^{-6}$ & $<10^{-6}$       & $6.30$        \\ \hline
\multicolumn{1}{|l|}{Type-2 Dataset} & $<10^{-6}$ & $<10^{-6}$ & $5.30$     & $0.56$        & $6.12$        \\ \hline
\multicolumn{1}{|l|}{Type-3 Dataset} & $<10^{-6}$ & $<10^{-6}$ & $5.94$     & $1.90$    & $6.35$        \\ \hline
\multicolumn{1}{|l|}{Type-4 Dataset} & $<10^{-6}$ & $<10^{-6}$ & $5.58$     & $12.67$      & $6.45$        \\ \hline
\end{tabular}
\end{table}

Table~\ref{tb:RMSE} illustrates the RMSE comparison between different methods.  In all datasets, our approach (OPT) results are the same as the greedy one. In contrast, linear regression perform poorly in all datasets except the Type-1 datasets. This is because the Type-1 datasets have only a single homogeneous cluster, which implies that there is only a single linear function $h(x)$ that generated $y$ from $x$. However, Type-2/3/4 datasets have more than one linear function $h(x)$. By fitting a single linear regression function to these datasets, we expect to have a huge RMSE for these datasets (see Proposition~\ref{prop:homoAndHeteroCls}). For a mixture of regression (Section~\ref{sec:baselineMixtureModel}), it performs well for a dataset that possesses a low number of homogeneous clusters. Nevertheless, for the complicated datasets (Type-4) that has 13 homogeneous clusters in the different layers, the mixture of regression was unable to perform well. This indicates that the mixture of regression, despite of the generalization of linear regression, is quite unstable when the number of homogeneous clusters rises.  The last column is the RMSE from the residual of difference between any $y_i$ with the mean of $y$ that we designate it as a null model. The results demonstrate that OPT and Greedy perform a lot better than the null model.     \\

\begin{table}[]
\caption{The average precision, recall, and F1 score values of approaches on each dataset type. Each element in the table is the average value from 100 datasets per dataset type.}
\label{tb:SimF1}
\begin{tabular}{c|c|c|c|c|c|c|c|c|c|}
\cline{2-10}
                                 & \multicolumn{3}{c|}{OPT} & \multicolumn{3}{c|}{Greedy} & \multicolumn{3}{c|}{Mixture of Regression}\\ \cline{2-10} 
                                 & Precision  & Recall & F1 & Precision  & Recall  & F1   & Precision  & Recall  & F1 \\ \hline
\multicolumn{1}{|c|}{Type-1 Dataset} & 1          & 1      & 1  & 0.01       & 0.01    & 0.01 & 1          & 1      & 1\\ \hline
\multicolumn{1}{|c|}{Type-2 Dataset} & 1          & 1      & 1  & 0.22       & 0.22    & 0.22 & 0.99          & 1      & 0.99\\ \hline
\multicolumn{1}{|c|}{Type-3 Dataset} & 1          & 1      & 1  & 1          & 1       & 1    & 0.96          & 1      & 0.97\\ \hline
\multicolumn{1}{|c|}{Type-4 Dataset} & 1          & 1      & 1  & 0.33       & 0.45    & 0.38 & 0.76          & 0.97      & 0.82\\ \hline
\end{tabular}
\end{table}

In the aspect of performance of inferring the maximal homogeneous partition, we reported the results of methods in Table~\ref{tb:SimF1}. In the table, our method (OPT) can infer maximal homogeneous partition perfectly while the greedy performed mostly poorly except in Type-3 datasets. This is because the greedy algorithm reports mostly the clusters at the last layer to be members of output partition and Type-3 datasets have all clusters members of the maximal homogeneous partition at the last layer.  For a mixture of regression, despite of no access to multi-resolution partitions, it can retrieve homogeneous clusters well for datasets that have few homogeneous clusters (Type-1/2/3). However, it performs fairly in the complicated Type-4 datasets.

\begin{table}[]
\caption{The average RMSE values of approaches on nonlinear datasets: polynomial and exponential datasets. Each element in the table is the average of RMSE from 100 datasets per dataset type. Each row represents results from a specific type of datasets. Each column represents a different method. The (LIN) in rows implies that we used the linear function for all methods except $y_i-\bar{y}$  to fit the data while the (EXP) in rows implies we used exponential function to fit the data except $y_i-\bar{y}$. }
\begin{small}
\begin{tabular}{c|c|c|c|c|c|}
\cline{2-6}
                                         & OPT        & Greedy     & Linear Regression & Mixture of Regression & $y_i-\bar{y}$ \\ \hline
\multicolumn{1}{|l|}{EXP Dataset (LIN)}  & 8.41       & 7.42       & 12.91             & 5.00                  & 13.99         \\ \hline
\multicolumn{1}{|l|}{Poly Dataset (LIN)} & 16.64      & 13.46      & 22.74             & 12.13                 & 24.73         \\ \hline
\multicolumn{1}{|l|}{EXP Dataset (EXP)}  & $<10^{-6}$ & $<10^{-6}$ & 12.26             & 5.16                  & 14.20         \\ \hline
\multicolumn{1}{|l|}{Poly Dataset (EXP)} & 14.59      & 13.45      & 22.43             & 12.40                 & 24.68         \\ \hline
\end{tabular}

\label{tb:RMSE-nonlinear}
\end{small}
\end{table}

 We also extended our approach and other methods to be able to fit data using the exponential function.  We reported the results of using nonlinear datasets in the analysis. Table~\ref{tb:RMSE-nonlinear} shows the average RMSE values of approaches on nonlinear datasets: polynomial and exponential datasets. Our method (OPT) is not the best compared to Greedy and Mixture of Regression methods in the aspect of reducing RMSE. 
 
 The reason that the Mixture of regression outperforms OPT is that it tries to find the fitting models without boundary constraints of partitions, while OPT fits models w.r.t. the given boundary constraints. It is possible that there are some partitions that are not the same as the given partitions from the input that have lower RMSEs when we fit models using them. If such partitions exist, then the Mixture of regression might use the unconstrained partitions to fit its models. Hence, the RMSEs of the Mixture regression can be lower than OPT's in this case. Additionally, the assumption we implicitly made was that if all methods fit models under the partition constraints, then the Greedy has the lowest RMSE. Because the Mixture regression breaks this assumption, its RMSE can be lower than the RMSE of the greedy algorithm.
 
 However, when we consider the average precision, recall, and F1 score values in Table~\ref{tb:SimF1-nonlin} for the task of inferring the correct maximal homogeneous partitions, OPT performed better than all methods. 

Comparing linear and exponential model fitting, in Table~\ref{tb:SimF1-nonlin} (LIN) rows, linear model fitting of OPT performed well in exponential datasets but it did not perform well on the polynomial datasets. In contrast, the result of exponential model fitting in (EXP) rows shows that OPT performed well in both exponential and polynomial datasets. This indicates that our approach is able to be extended to infer maximal homogeneous partitions in nonlinear datasets well. 

\begin{table}[]
\caption{The average precision, recall, and F1 score values of approaches on nonlinear datasets: polynomial and exponential datasets. Each element in the table is the average value from 100 datasets per dataset type. The (LIN) in the rows implies that we used the linear function for all methods  to fit the data while the (EXP) in rows implies we used exponential function to fit the data.}
\label{tb:SimF1-nonlin}
\begin{small}

\begin{tabular}{c|c|c|c|c|c|c|c|c|c|}
\cline{2-10}
                                         & \multicolumn{3}{c|}{OPT}  & \multicolumn{3}{c|}{Greedy} & \multicolumn{3}{c|}{Mixture of Regression} \\ \cline{2-10} 
                                         & Precision & Recall & F1   & Precision  & Recall  & F1   & Precision       & Recall       & F1        \\ \hline
\multicolumn{1}{|c|}{EXP Dataset (LIN)}  & 0.83      & 0.87   & 0.85 & 0.30       & 0.43    & 0.35 & 0.19            & 0.16         & 0.17      \\ \hline
\multicolumn{1}{|c|}{Poly Dataset (LIN)} & 0.53      & 0.57   & 0.55 & 0.30       & 0.43    & 0.35 & 0.16            & 0.16         & 0.16      \\ \hline
\multicolumn{1}{|c|}{EXP Dataset (EXP)}  & 1.00      & 1.00   & 1.00 & 0.33       & 0.44    & 0.38 & 0.19            & 0.18         & 0.19      \\ \hline
\multicolumn{1}{|c|}{Poly Dataset (EXP)} & 0.92      & 0.94   & 0.93 & 0.30       & 0.43    & 0.35 & 0.16            & 0.15         & 0.15      \\ \hline
\end{tabular}
\end{small}
\end{table}

In summary, the results in this section indicate that our approach performance has the same RMSE as the greedy one (Table~\ref{tb:RMSE}), nevertheless, its performance is a lot better in the aspect of inferring the maximal homogeneous partition (Table~\ref{tb:SimF1}). For mixture of regression model, it performed well for Type-1/2/3 datasets, while its performance decreased for a Type-4 datasets, which have clusters in the maximal homogeneous partition from multiple layers.   In fact, the homogeneous clusters that are inferred by a mixture of regression might not be consistent with a given multi-resolution partitions. However, we can deploy mixtures of regression to approximate homogeneous clusters when datasets come without multi-resolution partitions. But we still need our approach to estimate the degree of homogeneity of each cluster as well as using it to compare different ways of partitioning a population. Moreover, we can even use the mixture of regression as a kernel instead of using linear regression to predict our dependent variable.

\subsection{Case study: inferring informative-administrative-level subdivision to predict population incomes}

We use household population data (Section~\ref{sec:realdata}) to demonstrate the application of our framework to help the policy maker to make a policy. In reality, policy maker cannot create the unique policy for every village to solve the poverty issues because of the limited resource. On the other hand, making a single policy cannot solve all poverty issues since each region has their own unique problems. Hence, we propose to use our framework to find the largest level of administrative subdivision that have enough common issues (measuring by $\eta$ in Eq.~\ref{eq:Rsq}) so that the policy makers do not need to establish a policy for each village. In our framework, a $\gamma$-Homogeneous cluster is considered to be an informative subdivision that the policy makers can establish a policy. In this case study, we set $\gamma=0.05$ which requires the correlation between dependent variable $Y$ and predicted $\hat{Y}$ around 0.22 that is closed to a moderate correlation (the correlation around 0.3 is considered as a moderate correlation~\cite{cohen1988set,cohen2013statistical}).

\begin{table}[]
\caption{The RMSE values of approaches on each province population. Each element in the table is the RMSE value. Each row represents results from a specific province. Each column represents a different method.}
\label{tb:RMSErealdata}
\begin{tabular}{c|c|c|c|c|}
\cline{2-5}
\multicolumn{1}{l|}{}                     & \textbf{OPT} & \textbf{Greedy} & \textbf{Linear Regression} & \textbf{$y_i-\bar{y}$} \\ \hline
\multicolumn{1}{|c|}{\textbf{Khon Kaen}}  & 69,620.3      & 67,210           & 82,571                      & 84,994.6              \\ \hline
\multicolumn{1}{|c|}{\textbf{Chiang Mai}} & 89,067.1             &   87,727.5              &   104,938                         &     106,511                 \\ \hline
\end{tabular}
\end{table}

Table~\ref{tb:RMSErealdata} illustrates RMSE of each methods from two provinces: Khon Kaen and Chiang Mai. The result shows that our approach (OPT) has a bit higher RMSE compared to the greedy algorithm. In contrast, linear regression and the null model $y_i-\bar{y}$ have larger RMSE.  Linear regression represents the approach of finding one policy for all regions within a province to predict the income of households, while our approach is trying to find a unique policy for each specific region. The result in Table~\ref{tb:RMSErealdata} indicates that each region has its own problem. This is why fitting linear regression for the entire province population performed poorly.  

\begin{table}[]
\caption{The number of clusters that are the members of output partition in each method.}
\label{tb:Nclusters}
\begin{tabular}{l|c|c|c|c|c|c|c|c|c|}
\cline{2-9}
                                          & \multicolumn{4}{c|}{\textbf{OPT}}                                        & \multicolumn{4}{c|}{\textbf{Greedy}}                                     \\ \cline{2-9} 
                                          & \textbf{1st} & \textbf{2nd} & \textbf{3rd} & \textbf{4th}  & \textbf{1st} & \textbf{2nd} & \textbf{3rd} & \textbf{4th}  \\ \hline
\multicolumn{1}{|c|}{\textbf{Khon Kaen}}            & 0            & 2            & 55           & 1,785            & 0            & 0            & 1            & 2,643         \\ \hline
\multicolumn{1}{|c|}{\textbf{Chiang Mai}} &         0     &      1        &     41         &       1,595       &   0           &    0           &       0       &    2,223          \\ \hline
\end{tabular}
\end{table}

Table~\ref{tb:Nclusters} shows the number of clusters that are the members of output partition in each method. The first layer is the province level, the second layer is the amphoe level, the third layer is the tambon level, and the fourth layer is the village level. The result indicates that our approach can detect informative subdivisions beyond the last layer while the greedy approach can report only the clusters in the last layer. 

After we got the maximal homogeneous partition, we need to validate whether the result is consistent with the ground truth of the government records.\footnote{ All reference data of government records used here are from  \href{https://www.tpmap.in.th/about_en/}{https://www.tpmap.in.th/}. We used the 2019 records of poor people as a ground truth in this result section.}  The following areas are examples of members of the maximal homogeneous partition. 

First, in Khon Kaen province, the ``Subsomboon'' tambon, which is on the 3rd layer, has 11 villages in 4th layer. 8 out of 11 villages have the majority of poor people suffering from the lack of the education issue. Second, another tambon in Khon Kaen province is Khu Kham. It has 7 out of 8 villages faces the lack of education issue among poor people. Third, in Chiang Mai province, in San Sai amphoe (2nd layer) has 12 tambons. 10 out of 12 tambons has financial and education as leading issues. 

We can see that, for each tambon in the examples above, policy makers can make a single policy for all villages because the majority of subareas in each homogeneous area have almost the same issues. 

Next, the non-homogeneous cases are provided below.

In Khon Kaen province, Sila tambon (3rd layer) consists of 28 villages. There are 11 villages that have no poor people. There are 11 villages that have financial issues among the majority of poor people. There are 5 villages facing health issues among the majority of poor people in each village. One village has equal numbers of poor people facing financial and health issues. We can see that the policy makers need to group villages in this tambon w.r.t. the issue types before making policies that fit each type of issue.  

In Chiang Mai province, Chiang Dao amphoe (2nd layer) has 7 tambons. Two tambons have financial issues among the majority of poor people. Two tambons have financial and education issues. Two tambons have financial and health issues. Lastly, one tambon has the education issue as a main problem. It is quite challenging for policy makers to place a single policy for the entire Chiang Dao amphoe. 

The next one is the example of how policy makers can use our system to combat poverty. One of the tambons in the output maximal homogeneous partition of our approach  from Khon Kaen province is ``Subsomboon''.  We used the t-test to find the variable importance of coefficients of regression that have values far from zero for this tambon. The null hypothesis is that the absolute of specific coefficient is zero, while the alternative hypothesis is that the absolute of coefficient is greater than zero. We reject the null hypothesis at $\alpha = 0.01$.  

Out of 30 coefficients, there are only four coefficients that we can reject the null hypothesis. However, the only positive coefficient is the indicator of  whether a household has a saving account with a bank. This implies that having saving account is associated with income. The policy maker should consider why some households can have saving account to make a policy of combating poverty in Subsomboon tambon.  For Chiang Mai province, San Sai amphoe is one of the informative clusters in the output partition. The variables that are important and have positive coefficients mostly are health issues in the household that might make the members of household cannot work efficiently.  Hence, the policy maker should consider to make a policy that supports the health of people in the area.   

In practice, the greedy algorithm performs well only when all homogeneous clusters are mostly in the last layer. Even in this case, its performance is not significantly better than OPT.  Moreover, the time and space complexities of both algorithms are the same. Hence, OPT should always be used to find the maximal homogeneous partition.  
\section{Discussion}
\subsection{Inferring multiresolution partitions}

In practice, the obvious case that a dataset of multiresolution partitions can be found is a dataset that is related to administrative subdivisions of country. In each sub-districts in a specific resolution, the data can be varied, such as household income, types of land use,  distribution of labor force, etc. In the eyes of policy makers, knowing which area share similar models for target properties make them have an easier way to manage resource or to place policies. 

However, in some case, there is no given multiresolution partitions. For example, suppose we divide a natural farming area into a grid and we have information about history of land use of each block in the grid (e.g. the remaining resource). Our goal is to infer partitions of multiple neighbor blocks that we can declare as either preserved areas to protect the land or non-preserved areas. In this situation, we can use hierarchical clustering~\cite{10.1093/comjnl/16.1.30, 10.1093/comjnl/20.4.364, doi:10.1080/01621459.1963.10500845} (hclust function in R programming~\cite{Rprog} ) to build a hierarchical tree of blocks. We can assign layers of partitions based on the distance of nodes from the nearest leaf. Another approach is to use the mixture regression method~\cite{grun2007applications,JSSv011i08,grun2006fitting} in Section~\ref{sec:baselineMixtureModel}. We might run the mixture regression method to get the first-layer result, then, for each inferred partition, we might run the method again to find the sub-districts. Nevertheless, this approach requires users to set an appropriate number of clusters $k$.

\subsection{Extension to non-linear models}
\label{sec:ex2nonlin}
It is possible to extend our work to non-linear models if we care only a number of bits and information we need to explain models. However, in machine learning, there are issues of underfitting and overfitting that we should consider. 

To resolve underfitting issues, we try to find $h$ that  minimizes $\mathcal{L}(S|h)$ in Eq.~\ref{eq:LSh}. Nevertheless, a complex model typically fits data well, which makes $\mathcal{L}(S|h)$ small. To resolve the overfitting issue, since we prefer simpler model that has its performance close to complicated model, we try to find $h$ that also  minimizes $\mathcal{L}(h)$ in Eq.~\ref{eq:Lh}. 

We can say that $\mathcal{L}(h)$ represents the model complexity. However, there are many challenges that we need to address in order to measure $\mathcal{L}(h)$ for any arbitrary $h$.

It is straightforward to compare $\mathcal{L}(h_i)$ against $\mathcal{L}(h_j)$ from the same function class (e.g. a linear function class, an exponential function class, etc) by measuring a number of bits we need to keep coefficients and terms. For example, the model complexity of one linear function that has less number of terms with small-size coefficients can be considered as a simpler model than a complicated linear function that has many terms with large coefficients. On the contrary, it is challenging to compare two models from different classes and declare that one model is less complicated than another (e.g. sine function vs. logarithmic function). 

One of the possible ways to measure the complexity is to utilize the concept of the Vapnik-Chervonenkis (VC) dimension~\cite{doi:10.1137/1116025} in learning theory. The VC dimension of a function class $\mathbb{H}$ can be used to measure a sample complexity~\cite{abu2012learning,mohri2018foundations}; a model with a higher VC dimension requires more data to achieve the same generalization performance compared to a model with a lower VC dimension. In our context, we can give a higher number of  bits to $\mathcal{L}(h)$ for a model from a function class with the higher VC dimension. Nevertheless, not all function classes  have VC dimensions. Alternatively, in learning theory, we can use Rademacher complexity~\cite{koltchinskii2001rademacher} that measures the richness of a function family $\mathbb{H}$ in term of how well $\mathbb{H}$ can fit random noise. The more complex function class can fit random noise better than the simpler function class~\cite{mohri2018foundations}. Hence, we make $\mathcal{L}(h)$ to have more bits if a function class fits random noise better than another function class. 

\subsection{Sparsity regularization}
\label{sec:SparsityReg}

In the context of sparsity regularization, we also want to get the model that has the minimum number of dimensions we used in the model along with the prediction performance $\mathcal{L}(S|h)$ (Eq.~\ref{eq:LSh}) and  the model complexity $\mathcal{L}(h)$ (Eq.~\ref{eq:Lh}). Suppose $\beta=(p_1,\dots,p_l)$ is a vector of parameters for a model $h$, given a set of data $S=\{(x_1,y_1),\dots,(x_n,y_n)\}$ and a penalty weight $\lambda$, we can have the following optimization problem to find the optimal $\beta$ from data that utilizes the sparsity regularization.

\begin{equation*}
    \beta^*(\lambda) = \text{argmin}_{\beta \in \mathbb{R}^l } \frac{1}{n}\sum_{i=1}^n loss (y_i, f_{\beta}(x_i) ) +\lambda\left\| \beta \right\|_0 
\end{equation*}

Where $\left\| \beta \right\|_0$ is the number of parameters in $\beta$ that has nonzero values, $f_{\beta}(x_i)$ is a function that returns predicted value of $y_i$ using $\beta$ and $x_i$, and $loss()$ is a loss function. Based on the optimization problem above, we can modify $\mathcal{L}(h)$ to utilize the sparsity regularization.

\begin{equation*}
\mathcal{L}(h,\lambda) = \sum_{i=1}^l \mathcal{L}_{\mathbb{R}}(p_i)   +  \lambda\mathcal{L}_{\mathbb{R}}(\left\| \beta \right\|_0)
\end{equation*}

Hence, we have

\begin{equation}
\label{eq:Sreg}
\mathcal{L}(S,\lambda)_{\mathbb{H}} =\min_{h\in\mathbb{H}} \mathcal{L}(h,\lambda)+ \mathcal{L}(S|h).
\end{equation}

The Eq.~\ref{eq:Sreg} above encourages the model $h$ that has a good performance (minimizing $\mathcal{L}(S|h)$) as well as using less parameters with the parameters that have small magnitudes (minimizing $\mathcal{L}(h,\lambda)$).

\subsection{Encoding methods for real numbers}
As a reminder, throughout this paper, we have Assumption~\ref{assump:bignumberMoreCode} states as follows. 

For any $x_1,x_2 \in \mathbb{R}$ s.t. $|x_1|\geq|x_2|$, we have 

\begin{equation*}
    \mathcal{L}_{\mathbb{R}}(x_1)\geq \mathcal{L}_{\mathbb{R}}(x_2).
\end{equation*}

This assumption is true for the \textit{Single-Precision Floating-point Format} (the IEEE Standard 754~\cite{hough2019ieee}) that is widely used in several well-known programming languages (e.g. Fortran,  C, C++, C\#, Java) to encode real numbers. However, the floating-point format makes every real number use an equal number of bits: 32 bits per number.

Nevertheless, the work by ~\cite{Lindstrom:2018:UCR:3190339.3190344} provided the framework to implement the universal coding for real numbers. In this coding,  a number of bits varies proportional to a real-number magnitude. Specifically, for any $y \in \mathbb{R}$,

\begin{equation*}|\mathcal{E}(y)| \propto \lceil\text{log}_2 |y|\rceil\end{equation*} 

where $|\mathcal{E}(y)|$ is a number of bits of real-number representation in \cite{Lindstrom:2018:UCR:3190339.3190344}'s work. In practice, if a specific application has high variation of magnitudes of real numbers, then the encoding in ~\cite{Lindstrom:2018:UCR:3190339.3190344} might be able to reduce spaces requires for keeping real numbers and our framework can obviously gain this benefit. For a specific application, one might check whether the encoding scheme for real numbers that complied with our assumption is suitable for being used under the application. 

\section{Conclusion}
In this paper, we addressed the challenges encountered by many policy makers: to find the largest possible areas with common problems where similar policies can be implemented and limited resources are optimally utilized. Given a set of target variable and predictors, as well as a set of multi-resolution clusters that represent administrative divisions as inputs, we proposed a framework to infer a set of largest-informative clusters that have efficient predictors. Efficient predictors of each informative cluster cover the entire cluster population. We used both simulation datasets and real-world dataset of Thailand's population household information to evaluate and to illustrate the application of our framework. The results showed that our framework performed better than all baseline methods. Moreover, in Thailand's population household information, the framework can infer the predictors of people income for the specific areas, which can be used to guide policy makers to utilize this information to combat poverty. Particularly, simulated data shows that our approach can estimate homogeneity of each cluster and compare different ways of population partition, which would enhance the formulation of evidence-based policies as well as their assessments. This finding is confirmed with real-world data from the two major provinces of Thailand where it is possible to cluster regions with similar problems up to the 2nd (Amphoe) level. More importantly, our framework can be used beyond the context of income prediction but for any kind of regression analysis on multi-resolution clusters. Lastly, we also provide the R package, MRReg, which is the implementation of our framework in R language with the manual at \cite{SharedLink}. The official link for MRReg at the Comprehensive R Archive Network (CRAN) can be found at \href{https://cran.r-project.org/package=MRReg}{https://cran.r-project.org/package=MRReg}.

\section*{Acknowledgment}

The authors would like to thank the National Electronics and Computer Technology Center (NECTEC), Thailand, to provide our resources in order to successfully finish this work.

This paper was supported in part by the Thai People Map and Analytics Platform (TPMAP), a joint project between the office of National Economic and Social Development Council (NESDC) and the National Electronic and Computer Technology Center (NECTEC), National Science and Technology Development Agency (NSTDA), Thailand.

\bibliographystyle{ACM-Reference-Format}


\end{document}